\documentclass[conference,10pt]{ieeeconf}
\IEEEoverridecommandlockouts

\usepackage[utf8]{inputenc} 
\usepackage[T1]{fontenc}    
\usepackage{url}            
\usepackage{booktabs}       
\usepackage{amsfonts}       
\usepackage{nicefrac}       
\usepackage{microtype}      
\usepackage{lipsum}		
\usepackage{graphicx}
\usepackage{doi}

\usepackage{titlesec}

\usepackage{subfig}

\usepackage{array}
\usepackage{pdfpages}
\usepackage{booktabs}
\usepackage{amsmath,amsfonts,amssymb,dsfont}
\usepackage{stmaryrd}
\usepackage{nccmath}

\usepackage{xcolor}
\usepackage{hyperref}
\hypersetup{
    colorlinks=true,
    linkcolor=blue,
    filecolor=magenta,      
    citecolor=blue,
    urlcolor=cyan,
    }

\usepackage{cite}

\newcommand{\appref}[1]{\hyperref[#1]{App.~\ref*{#1}}}

\usepackage{mathtools}
\mathtoolsset{showonlyrefs=true}


\usepackage[standard,amsmath]{ntheorem}


\theorembodyfont{\upshape}
\newtheorem{assumption}{Assumption}

\newcommand\ALPHABET{\mathsf}
\newcommand\reals{\mathds{R}}
\newcommand\integers{\mathds{Z}}
\newcommand\naturalnumbers{\mathds{N}}

\DeclareMathOperator*{\argmax}{arg\,max}

\newcommand\PR{\mathds{P}}
\newcommand\EXP{\mathds{E}}
\newcommand\IND{\mathds{1}}
\newcommand\GRAD{\nabla}

\DeclarePairedDelimiter{\NORM}{\lVert}{\rVert}
\DeclarePairedDelimiter\MOD{\llbracket}{\rrbracket}

\newcommand*\sublabel[1]{\textsc{\MakeLowercase{#1}}}

\newcommand*\piexpl{\mu}

\newcommand*\pref{p_{\sublabel{REF}}}
\newcommand*\zexpl{\zeta_{\piexpl}}

\newcommand*\reg{\Omega}
\newcommand*\regconj{\Omega^{\star}}
\newcommand*\Bellman{\mathcal{B}}
\newcommand*\regBellman{\mathcal{B}_{\piexpl}}
\newcommand*\periodicregBellman{\mathbb{B}_{\piexpl}}

\newcommand\MATRIX[1]{\begin{bmatrix}#1\end{bmatrix}}

\usepackage{pifont}

\title{\LARGE \bf Convergence of regularized agent-state-based Q-learning in POMDPs}
\author{
Amit Sinha,
Matthieu Geist, Aditya Mahajan%
\thanks{A. Sinha and A. Mahajan are with the Department of Electrical and Computer Engineering, McGill University, Canada. Emails: amit.sinha@mail.mcgill.ca, aditya.mahajan@mcgill.ca. Their work was supported in part by in part by a grant from Google’s Institutional Research Program in collaboration with Mila.
M. Geist is with the Earth Species Project. Email: matthieu@earthspecies.org}%
}



\begin{document}
\maketitle


\begin{abstract}
In this paper, we present a framework to understand the convergence of commonly used Q-learning reinforcement learning algorithms in practice. Two salient features of such algorithms are: (i)~the Q-table is recursively updated using an agent state (such as the state of a recurrent neural network) which is not a belief state or an information state and (ii)~policy regularization is often used to encourage exploration and stabilize the learning algorithm. We investigate the simplest form of such Q-learning algorithms which we call regularized agent-state-based Q-learning (RASQL) and show that it converges under mild technical conditions to the fixed point of an appropriately defined regularized MDP, which depends on the stationary distribution induced by the behavioral policy. We also show that a similar analysis continues to work for a variant of RASQL that learns periodic policies. We present numerical examples to illustrate that the empirical convergence behavior matches with the proposed theoretical limit.
\end{abstract}

\section{Introduction}

Reinforcement learning (RL) is a useful paradigm in learning optimal control policies via simulation when the system model is not available or when the system is too large to explicitly solve the dynamic program. 
The simplest setting is the fully-observed setting of Markov decision processes (MDP), where the controller has access to the environment state. Most existing theoretical RL results on convergence of learning algorithms and their rates of convergence and regret bounds, etc.\ are established for the MDP setting.


However, in many real-world applications, such as
autonomous driving, robotics, healthcare, finance, and others,
the controller does not have access to the environment state;
rather, it has a partial observation of the environment state. So these applications need to be modeled as a partially observable Markov decision process (POMDP) rather than a MDP.


When the system model is known, the POMDP model can be converted into an MDP by considering the controller's belief on the state of the environment (also called the belief state) as an information state~\cite{aastrom1965optimal,smallwood1973optimal,subramanian2022approximate}.
However, such a reduction does not work in the RL setting because the belief state depends on the system model, which is unknown. Nonetheless, there have been several empirical works which show that standard RL algorithms for MDPs continue to work for POMDPs if one uses ``frame stacking'' (i.e., use the last few observations as a state) or recurrent neural networks~\cite{hausknecht2015deep, igl2018deep, zhu2017improving, meng2021memory}. In recent years, considerable progress has been made in understanding the properties of such algorithms but a complete theoretical understanding is still lacking. 


A common way to model such RL algorithms for POMDPs is to consider the state of the controller as an agent state~\cite{dong2022simple}. Such agent-state-based-controllers have also been considered in the planning setting as they can be simpler to implement than belief-state based controllers. See~\cite{sinha2024agent} for an overview. 

A challenge in understanding the convergence of agent-state-based RL algorithms for POMDPs is that an agent state is not an information state. So, it is not possible to write a dynamic programming decomposition based on the agent state. So, one cannot follow the typical proof techniques used to evaluate the convergence of RL algorithms for MDPs (where RL algorithms can be viewed as stochastic approximation variant of MDP algorithms such as value iteration and policy iteration to compute the optimal policy). 

There is a good understanding of the convergence of agent-state-based Q-learning (ASQL) for POMDPs~\cite{Jaakkola1994,Kara2022,sinha2024periodic} (which is related to Q-learning for non-Markovian environments~\cite{Kara2024,chandak2024reinforcement}). There is also some work on understanding the convergence of actor-critic algorithms for POMDPs~\cite{konda2003onactor,subramanian2022approximate}. However, most practical RL algorithms for POMDPs use some form of policy regularization, while most theoretical analysis is restricted to the unregularized setting.
\looseness=-1


Regularization adds an auxiliary loss to the per-step rewards. This loss typically depends on the policy but may also depend on the value function. Regularization is commonly used in RL algorithms for various reasons, such as entropy regularization to encourage exploration~\cite{schulman2017proximal,haarnoja2018sac,ahmed2019understanding} and improve generalization~\cite{henderson2018deep}, KL-regularization to constrain the policy updates to be similar to a prior policy~\cite{peters2010relative,schulman2015trust}, and others.
Unified theory for different facets of regularization in MDPs is provided in~\cite{neu2017unified,geist2019theory}.




Based on the various benefits of regularization in RL for MDPs, it is also commonly used in RL for POMDPs~\cite{igl2018deep, schulman2015trust, schulman2017proximal, hafner2019learning, zhang2019solar, hafner2019dream, subramanian2022approximate, ni2024bridging}. However, the recent theoretical analysis of RL for POMDPs discussed above do not consider regularization. The objective of this paper is to present initial results on understanding regularization in RL for POMDPs. 

There is some recent work on understanding regularization in POMDPs but they either consider the role of entropy regularization in POMDP solvers (when the model information  is known)~\cite{delecki2024entropy,somani2013despot}, or consider regularization of the belief distribution~\cite{molloy2023smoother} or observation distribution~\cite{zamboni2024limits}. These results do not directly provide an understanding of the role of regularization in RL for POMDPs.



In this paper, we revisit Q-learning for POMDPs when the learning agent is using an agent state and using policy regularization. Our main contribution is to show that in this setting, Q-learning converges under mild technical conditions. We characterize the converged limit in terms of the model parameters and choice of behavioral policy used in Q-learning. Recently, it has been argued that periodic policies may perform better when considering agent-state-based POMDPs~\cite{sinha2024periodic}. We show that our analysis extends to a periodic version of regularized Q-learning as well.

\textit{Notation:}
We use uppercase letters to denote random variables (e.g. $S,A$, etc.), lowercase letters to denote their realizations (e.g. $s,a$, etc.) and calligraphic letters to denote sets (e.g. $\ALPHABET{S}, \ALPHABET{A}$; etc.). Subscripts (e.g. $S_t,A_t$, etc.) denote variables at time~$t$. Similarly, $S_{1:t}$ denotes the collection of random variables from time $1$ to $t$. $\Delta (\ALPHABET{S})$ denotes the space of probability measures on a set $\ALPHABET{S}$; $\PR(\cdot)$ and $\EXP[\cdot]$ denote the probability of an event and the expectation of a random variable, respectively; and $\IND(\cdot)$ denotes the indicator function. $\lvert \ALPHABET S \rvert$ denotes the number of elements in $\ALPHABET S$ (when it is a finite set). $\reals$ denotes real numbers. $[L]$ denotes the set of integers from $0$ to $L-1$, where $L \in \integers^{+}$. $\MOD{\ell}$ denotes $(\ell \text{ mod } L)$.

\section{Background}

\subsection{Legendre-Fenchel transform (convex conjugate)}

We start with a short review of convex conjugates and Legendre-Fenchel transforms~\cite{rockafellar2009variational}, which are an important tool to understand regularization in MDPs~\cite{geist2019theory}.

\begin{definition} \label{def:LF-transform}
    For a strongly convex function $\reg \colon \reals^n \to \reals$, its convex conjugate $\regconj \colon \reals^n \to \reals$ is defined as
    \begin{align} \label{eq:convex-conj-objective}
        \regconj(q) = \max_{p \in \reals^n} \bigl\{ \langle p, q \rangle - \reg(p) \bigr\}.
    \end{align}
    The mapping $\Omega \mapsto \Omega^*$ is the Legendre-Fenchel transform.
\end{definition}

The following is a useful property of the Legendre-Fenchel transform for regularized MDPs:

\begin{lemma}[Based on \cite{hiriart2004fundamentals, mensch2018differentiable}] \label{lemma:opt-reg-pol}
Let $\Delta$ be a simplex in $\reals^n$ and $\reg \colon \Delta \to \reals$ be twice differentiable and a strongly convex function. Let $\regconj \colon \reals^n \to \reals$ be the Legendre-Fenchel transform of $\Omega$. Then, $\GRAD \regconj$ is Lipschitz and satisfies
\begin{equation} \label{eq:opt-pol-convex-conj}
    \GRAD \regconj (q) = \argmax_{p \in \Delta} \bigl\{ \langle p, q \rangle  - \reg(p) \bigr\}.
\end{equation}
\end{lemma}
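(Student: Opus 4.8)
The plan is to prove the two assertions separately, starting from the observation that because $\reg$ is strongly convex and $\Delta$ is compact and convex, the map $p \mapsto \langle p, q\rangle - \reg(p)$ is strongly concave and continuous on $\Delta$; hence for each $q$ the maximization on the right-hand side of~\eqref{eq:opt-pol-convex-conj} admits a \emph{unique} solution, which I denote $p^\star(q)$. (To reconcile the domain $\Delta$ with the $\reals^n$ in Definition~\ref{def:LF-transform}, one extends $\reg$ by $+\infty$ outside $\Delta$, so that $\regconj(q) = \max_{p\in\Delta}\{\langle p, q\rangle - \reg(p)\}$.) The task then reduces to showing (i) that $\regconj$ is differentiable with $\GRAD\regconj(q) = p^\star(q)$, and (ii) that the single-valued map $q \mapsto p^\star(q)$ is Lipschitz.

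For the gradient identity, I would use a standard envelope (Danskin-type) argument from convex analysis. Since $\regconj$ is a pointwise supremum of functions that are affine in $q$, it is convex in $q$, and its subdifferential at $q$ equals the (closed convex hull of the) set of maximizers of~\eqref{eq:opt-pol-convex-conj}. Because strong convexity of $\reg$ forces this maximizer to be unique, the subdifferential is the singleton $\{p^\star(q)\}$. A convex function whose subdifferential at a point is a singleton is differentiable there with gradient equal to that element, which yields $\GRAD\regconj(q) = p^\star(q)$ and establishes~\eqref{eq:opt-pol-convex-conj}.

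For Lipschitz continuity of $\GRAD\regconj = p^\star$, I would work directly with the first-order optimality conditions. Writing $\mu$ for the strong-convexity modulus of $\reg$, the constrained optimality condition at $q_1$ and $q_2$ gives the variational inequalities $\langle q_i - \GRAD\reg(p^\star(q_i)), p - p^\star(q_i)\rangle \le 0$ for all $p \in \Delta$. Testing the condition at $q_1$ with $p = p^\star(q_2)$ and the condition at $q_2$ with $p = p^\star(q_1)$ and adding the two yields $\langle \GRAD\reg(p^\star(q_1)) - \GRAD\reg(p^\star(q_2)), p^\star(q_1) - p^\star(q_2)\rangle \le \langle q_1 - q_2, p^\star(q_1) - p^\star(q_2)\rangle$. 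Bounding the left side below by $\mu\NORM{p^\star(q_1) - p^\star(q_2)}^2$ via strong convexity (strong monotonicity of $\GRAD\reg$) and the right side above by Cauchy--Schwarz gives $\NORM{p^\star(q_1) - p^\star(q_2)} \le \mu^{-1}\NORM{q_1 - q_2}$, so $\GRAD\regconj$ is $\mu^{-1}$-Lipschitz.

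The step I expect to require the most care is the constrained nature of the optimization: because $\reg$ lives only on the simplex $\Delta$, I cannot simply set a gradient to zero and must instead argue through subdifferentials and variational inequalities, being attentive to maximizers lying on the relative boundary of $\Delta$, where $\GRAD\reg$ is interpreted as a relative/one-sided gradient. Both pieces of the argument hinge essentially on strong convexity: it supplies the uniqueness of the maximizer needed for differentiability in~(i), and the quadratic lower bound needed for the Lipschitz estimate in~(ii); relaxing it to plain convexity would break the single-valuedness and the bound simultaneously.
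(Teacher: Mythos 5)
Your proof is correct. Note that the paper itself does not prove this lemma at all: it is stated as background and attributed to \cite{hiriart2004fundamentals, mensch2018differentiable}, so there is no in-paper argument to compare against. Your write-up is essentially the standard proof from those references, and both halves are sound: (i) extending $\reg$ by $+\infty$ outside $\Delta$, using compactness plus strong concavity of $p \mapsto \langle p,q\rangle - \reg(p)$ for uniqueness of the maximizer, and then invoking the fact that $\partial\regconj(q)$ equals the set of maximizers (Fenchel--Young/Danskin), so a singleton subdifferential forces differentiability with $\GRAD\regconj(q) = p^\star(q)$; (ii) the two variational inequalities at $q_1$ and $q_2$, added and combined with strong monotonicity of $\GRAD\reg$ and Cauchy--Schwarz, giving the $\mu^{-1}$-Lipschitz bound. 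Your closing caveat is the right one to flag: since $\Delta$ has empty interior in $\reals^n$, the optimality conditions and the strong-convexity modulus should be read relative to the affine hull of $\Delta$ (all difference vectors $p - p^\star$ and $p^\star(q_1) - p^\star(q_2)$ lie in the tangent space $\{v : \sum_i v_i = 0\}$, so the argument goes through verbatim there); and for regularizers like negative entropy whose gradient blows up on the relative boundary, the maximizer lies in the relative interior, so the variational inequality remains meaningful. With that reading, your argument is a complete, self-contained proof of the statement the paper only cites.
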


In Markov decision processes, one often regularizes the policy. Below we describe some of the commonly used policy regularizers. For the purpose of the discussion below, let $\ALPHABET A$ be a finite set (later we will take $\ALPHABET A$ to be the set of actions of an MDP, but for now we can consider it as a generic set). 
\begin{enumerate}
    \item \textbf{Entropy regularization} uses the regularizer $\reg \colon \Delta(\ALPHABET A) \to \reals$ given by
    \(
        \reg(p) = \frac{1}{\beta} \medop\sum_{a \in \ALPHABET A} p(a) \ln p(a)
    \)
    where $\beta \in \reals_{> 0}$ is a parameter. Its convex conjugate $\regconj \colon \reals^{|\ALPHABET A|} \to \reals$ is given by
    \(
        \regconj(q) = \frac{1}{\beta}  \ln \bigl(\medop\sum_{a \in \ALPHABET A} \exp (\beta q(a))\bigr).
    \)
    Furthermore, from Lemma~\ref{lemma:opt-reg-pol}, we get that the argmax in the definition of convex conjugate is achieved by
    \begin{equation*}
        p^{\star} (a) = \frac{\exp (\beta q(a))}{\sum_{a' \in \ALPHABET A} \exp (\beta q(a'))}.
    \end{equation*}

    \item \textbf{KL regularization} uses the regularizer $\reg \colon \Delta(\ALPHABET A) \to \reals$ given by
    \(
        \reg(p) = \frac{1}{\beta} \medop\sum_{a \in \ALPHABET A} p(a) \ln ({p(a)}/{\pref(a)}),
    \)
    where $\beta \in \reals_{> 0}$ is a parameter and $\pref \in \Delta(\ALPHABET A)$ is a reference distribution. Its convex conjugate $\regconj \colon \reals^{|\ALPHABET A|} \to \reals$ is given by
    \(
        \regconj(q) = \frac{1}{\beta} \ln \bigl(\medop\sum_{a \in \ALPHABET A} \pref(a) \exp (\beta q(a))\bigr).
    \)
    Furthermore, from Lemma~\ref{lemma:opt-reg-pol}, we get that the argmax in the definition of convex conjugate is achieved by
    \begin{equation*}
     p^{\star} (a) = \frac{\pref(a) \exp (\beta q(a))}{\sum_{a' \in \ALPHABET A} \pref(a') \exp (\beta q(a'))}.
    \end{equation*}
\end{enumerate}

\subsection{Regularized MDPs} \label{subsec:reg-mdps}

In this section, we provide a brief review of regularized Markov decision processes (MDPs), which are a generalization of standard MDPs with an additional ``regularization cost'' at each stage. 

Consider a Markov decision process (MDP) with state $s_t \in \ALPHABET S$, control action $a_t \in \ALPHABET A$, where all sets are finite. The system operates in discrete time. The initial state $s_1 \sim \rho$ and for any time $t \in \naturalnumbers$, we have
\begin{align*}
    \PR(s_{t+1} \mid s_{1:t}, a_{1:t}) = \PR(s_{t+1} \mid s_{t}, a_{t}) \eqqcolon P(s_{t+1} \mid s_t, a_t),
\end{align*}
where $P$ is a probability transition matrix. The system yields a reward $R_t = r(s_t, a_t) \in [0, R_{\max}]$. The rewards are discounted by a factor $\gamma \in [0,1)$. 

Consider a policy $\pi : \ALPHABET S \to \Delta(\ALPHABET A)$. Let $\reg \colon \Delta(\ALPHABET A) \to \reals$ be a strongly convex function that is used as a policy regularizer. Then, the \emph{regularized performance} of policy $\pi$ is given by
\begin{equation} \label{eq:mdp-performance}
    J^{\reg}_{\pi} \coloneqq \EXP^{\pi}
    \biggl[ \medop\sum_{t=1}^\infty \gamma^{t-1} \bigl[ r(s_t, a_t) - \reg(\pi(\cdot \mid s_t)) \bigr] \biggm| s_1 \sim \rho \biggr],
\end{equation}
where the notation $\EXP^{\pi}$ means that the expectation is taken with the joint measure on the system variables induced by the policy $\pi$.

The objective in a regularized MDP is to find a policy $\pi$ that maximizes the regularized performance $J^{\reg}_{\pi}$ defined above. A key step in understanding the optimal solution of the regularized MDP is to define the regularized Bellman operator $\Bellman^{\reg}$ on the space of real-valued functions on $\ALPHABET S \times \ALPHABET A$ as follows. For any $Q : \ALPHABET S \times \ALPHABET A \to \reals$, 
\begin{equation}
    \Bellman^{\reg} Q(s, a) = r (s, a) + \gamma \sum_{s' \in \ALPHABET S} P (s' \mid s,a) \regconj(Q(s', \cdot)),
\end{equation}
where $\regconj$ is the Legendre-Fenchel transform of $\reg$.

\begin{proposition}[Based on \cite{geist2019theory}] \label{prop:reg-mdp-result}
The following results hold:
\begin{enumerate}
    \item The operator $\Bellman^{\reg}$ is a contraction and therefore has a unique fixed point, which we denote by $Q^{\reg}$. By definition, 
    \begin{equation}
        Q^{\reg}(s, a) = r(s, a)  +  \gamma \sum_{s' \in \ALPHABET S} P(s' \mid s,a) \regconj(Q^{\reg}(s', \cdot)).
        \label{eq:mdp-dp-q} 
    \end{equation}

    \item Define the policy $\pi^{\reg,*} \colon \ALPHABET S \to \Delta(\ALPHABET A)$ as follows: for any $s \in \ALPHABET S$, 
   \begin{align}
        \pi^{\reg, \star}(\,\cdot \mid s) &=  \GRAD \regconj(Q^{\reg}(s, \cdot))
        \notag \\
        &=
        \argmax_{\xi \in \Delta(\ALPHABET A)} \left\{ \sum_{a \in \ALPHABET A} \xi(a) Q^{\reg} (s, a) - \reg(\xi) \right\}
        \label{eq:regconj-based-pi-star}
    \end{align}
    where the last equality follows from Lemma~\ref{lemma:opt-reg-pol}. Then, the policy $\pi^{\reg,\star}$ is optimal for maximizing  the regularized performance $J^{\reg}_{\pi}$ over the set of all policies.
 \end{enumerate}
\end{proposition}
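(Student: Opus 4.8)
The plan is to prove the two claims in sequence, relying on the Legendre--Fenchel machinery of Lemma~\ref{lemma:opt-reg-pol} together with a contraction argument for the first part and a policy-comparison argument for the second.

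For the first claim, I would first establish that the scalar map $q \mapsto \regconj(q)$ is $1$-Lipschitz in the sup-norm. Letting $p_1^\star$ and $p_2^\star$ be the maximizers in \eqref{eq:convex-conj-objective} for arguments $q_1$ and $q_2$, the suboptimality of $p_1^\star$ for $q_2$ (so that the $\reg(p_1^\star)$ terms cancel) gives
\begin{align*}
    \regconj(q_1) - \regconj(q_2)
    &\le \langle p_1^\star, q_1 \rangle - \langle p_1^\star, q_2 \rangle \\
    &= \langle p_1^\star, q_1 - q_2 \rangle \le \NORM{q_1 - q_2}_\infty,
\end{align*}
where the final bound uses $\sum_{a} p_1^\star(a) = 1$; by symmetry $\lvert \regconj(q_1) - \regconj(q_2)\rvert \le \NORM{q_1 - q_2}_\infty$. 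Substituting this into the definition of $\Bellman^{\reg}$ and using that $P(\cdot \mid s,a)$ is a probability distribution yields $\NORM{\Bellman^{\reg} Q_1 - \Bellman^{\reg} Q_2}_\infty \le \gamma \NORM{Q_1 - Q_2}_\infty$. Since $\gamma \in [0,1)$, Banach's fixed-point theorem gives the unique fixed point $Q^{\reg}$ satisfying \eqref{eq:mdp-dp-q}.

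For the second claim, I would define the regularized value function $V^{\reg}(s) \coloneqq \regconj(Q^{\reg}(s,\cdot))$ and, for an arbitrary policy $\pi$, the policy-evaluation operator
\begin{equation*}
    \Bellman^{\reg}_\pi V(s) = \sum_{a} \pi(a \mid s)\Bigl[ r(s,a) + \gamma \sum_{s'} P(s' \mid s,a) V(s') \Bigr] - \reg(\pi(\cdot \mid s)),
\end{equation*}
whose unique fixed point is $V^{\reg}_\pi$, the regularized value of $\pi$ from state $s$ (so that $J^{\reg}_\pi = \sum_s \rho(s) V^{\reg}_\pi(s)$). Since \eqref{eq:mdp-dp-q} lets us write $Q^{\reg}(s,a) = r(s,a) + \gamma \sum_{s'} P(s' \mid s,a) V^{\reg}(s')$, the bracketed expression above is exactly $Q^{\reg}$, whence $\Bellman^{\reg}_\pi V^{\reg}(s) = \langle \pi(\cdot \mid s), Q^{\reg}(s,\cdot)\rangle - \reg(\pi(\cdot \mid s)) \le \regconj(Q^{\reg}(s,\cdot)) = V^{\reg}(s)$ by the definition of the convex conjugate, with equality precisely when $\pi(\cdot \mid s) = \GRAD \regconj(Q^{\reg}(s,\cdot)) = \pi^{\reg,\star}(\cdot \mid s)$ by Lemma~\ref{lemma:opt-reg-pol}. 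Taking $\pi = \pi^{\reg,\star}$ gives $\Bellman^{\reg}_{\pi^{\reg,\star}} V^{\reg} = V^{\reg}$, so $V^{\reg}_{\pi^{\reg,\star}} = V^{\reg}$; for any other $\pi$, the inequality $\Bellman^{\reg}_\pi V^{\reg} \le V^{\reg}$ combined with the monotonicity of $\Bellman^{\reg}_\pi$ (affine in $V$ with nonnegative coefficients) yields $V^{\reg}_\pi = \lim_{n\to\infty}(\Bellman^{\reg}_\pi)^n V^{\reg} \le V^{\reg}$ pointwise. Averaging against $\rho$ gives $J^{\reg}_\pi \le J^{\reg}_{\pi^{\reg,\star}}$, proving optimality.

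I expect the main obstacle to be bookkeeping rather than any single sharp estimate: one must keep the optimality operator $\Bellman^{\reg}$ (acting on $Q$-functions) distinct from the policy-evaluation operator $\Bellman^{\reg}_\pi$ (acting on value functions), verify that the latter is both a $\gamma$-contraction and monotone so the comparison $V^{\reg}_\pi \le V^{\reg}$ is legitimate, and confirm that pointwise domination of value functions transfers to the scalar objective $J^{\reg}_\pi$ through the average over $\rho$.
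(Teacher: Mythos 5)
Your proof is correct, but there is nothing in the paper to compare it against: the proposition is stated as ``Based on \cite{geist2019theory}'' and imported wholesale from that reference, with no in-paper proof. What you have done is reconstruct, self-contained and specialized to finite $\ALPHABET S$ and $\ALPHABET A$, essentially the argument of \cite{geist2019theory}: part (1) via the $1$-Lipschitzness in sup norm of $\regconj$ restricted to the simplex (your cancellation trick with the two maximizers is the standard way to get it), which makes $\Bellman^{\reg}$ a $\gamma$-contraction; part (2) via the regularized policy-evaluation operator $\Bellman^{\reg}_\pi$, the inequality $\Bellman^{\reg}_\pi V^{\reg} \le V^{\reg}$ with equality exactly at the argmax policy (unique by strong convexity, and equal to $\GRAD\regconj(Q^{\reg}(s,\cdot))$ by Lemma~\ref{lemma:opt-reg-pol}), followed by monotonicity plus contraction to get $V^{\reg}_\pi \le V^{\reg} = V^{\reg}_{\pi^{\reg,\star}}$ and then averaging over $\rho$. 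Two minor points deserve a line each if this were to be written out in full: (i) you assert without proof that the fixed point of $\Bellman^{\reg}_\pi$ is the regularized return of $\pi$ from each start state; this is standard policy evaluation (unroll the recursion, using boundedness of $r$ and of $\reg$ on the compact simplex), but it is the step that actually connects the operator picture to the quantity $J^{\reg}_\pi$ in \eqref{eq:mdp-performance}. (ii) The phrase ``over the set of all policies'' in the proposition can only mean stationary policies $\pi \colon \ALPHABET S \to \Delta(\ALPHABET A)$, since \eqref{eq:mdp-performance} defines $J^{\reg}_\pi$ only for such $\pi$; your comparison argument covers exactly this class, and an extension to history-dependent regularized policies would require the usual additional MDP argument, which neither you nor the cited reference's statement as used here needs.
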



\section{System model and regularized Q-learning for POMDPs}

\subsection{Model for POMDPs}
Consider a partially observable Markov decision process (POMDP) with state $s_t \in \ALPHABET S$, control action $a_t \in \ALPHABET A$, and output $y_t \in \ALPHABET Y$, where all sets are finite. The system operates in discrete time with the dynamics given as follows. The initial state $s_1 \sim \rho$ and  for any time $t \in \naturalnumbers$, we have
\begin{align*}
    \PR(s_{t+1}, y_{t+1} \mid s_{1:t}, y_{1:t}, a_{1:t}) &= \PR(s_{t+1}, y_{t+1} \mid s_{t}, a_{t}) \\
    &\eqqcolon P(s_{t+1}, y_{t+1} \mid s_t, a_t),
\end{align*}
where $P$ is a probability transition matrix. In addition, at each time the system yields a reward $r_t = r(s_t, a_t) \in [0, R_{\max}]$. The rewards are discounted by a factor $\gamma \in [0,1)$. 

Let $\boldsymbol{\vec {\pi}} = (\vec \pi_1, \vec \pi_2, \dots)$ denote any (history dependent and possibly randomized) policy, i.e., under policy $\boldsymbol{\vec \pi}$ the action at time~$t$ is chosen as $a_t \sim \vec \pi_t(y_{1:t}, a_{1:t-1})$. The performance of policy $\boldsymbol{\vec {{\pi}}}$ is given by 
\begin{equation} \label{eq:pomdp-performance}
    J_{\boldsymbol{\vec{{\pi}}}} \coloneqq \EXP^{\boldsymbol{\vec{{\pi}}}}
    \biggl[ \medop\sum_{t=1}^\infty \gamma^{t-1} r(s_t, a_t)  \biggm| s_1 \sim \rho \biggr],
\end{equation}
where the notation $\EXP^{\boldsymbol{\vec \pi}}$ means that the expectation is taken with the joint measure on the system variables induced by the policy $\boldsymbol{\vec \pi}$.

The objective is to find a (history dependent and possibly randomized) policy $\boldsymbol{\vec{{\pi}}}$ to maximize $J_{\boldsymbol{\vec{{\pi}}}}$. When the system model is known, the above POMDP model can be converted to a fully observed Markov decision process (MDP) by considering the controller's posterior belief on the system state as an information state~\cite{aastrom1965optimal, smallwood1973optimal}. However, when the system model is not known, it is not possible to run reinforcement learning (RL) algorithms on the belief-state MDP because the belief depends on the system model. For that reason, in RL for POMDPs it is often assumed that the controller is an agent-state-based controller. 

\begin{definition}[Agent state]
An agent state is a model-free recursively updateable function of the history of observations and actions. In particular, let $\ALPHABET Z$ denote the agent state space. Then, the agent state is a process $\{z_t\}_{t \ge 0}$, $z_t \in \ALPHABET Z$,  which starts with some initial value $z_0$, and is then recursively computed as
\begin{equation} \label{eq:agent-state-update}
    z_{t+1} = \phi(z_t, y_{t+1}, a_t), \quad t \ge 0
\end{equation}    
where $\phi$ is a pre-specified agent-state update function. 
\end{definition}
Some examples of agent-state-based controllers are: (i)~a finite memory controller, which chooses the actions based on the previous $k$ observations; (ii)~a finite state controller, which effectively filters the possible histories to values from a finite set $\ALPHABET Z$. We refer the reader to~\cite{sinha2024agent} for a detailed review of agent-state-based policies in POMDPs.

We use $\boldsymbol {\pi} = (\pi_1, \pi_2, \dots)$ to denote an agent-state-based policy,\footnote{We use $\boldsymbol{\vec{\pi}}$ to denote history dependent policies and $\boldsymbol{\pi}$ to denote agent-state-based policies.} i.e., a policy where the action at time~$t$ is given by $a_t \sim \pi_t(z_t)$. 
An agent-state-based policy is said to be \textbf{stationary} if for all $t$ and $t'$, we have $\pi_t(a \mid z) = \pi_{t'}(a \mid z)$ for all $(z,a) \in \ALPHABET Z \times \ALPHABET A$.

If the agent state is an information state, then MDP-based RL algorithms can directly be applied to find optimal stationary solutions \cite{subramanian2022approximate}. However, in general, an agent state is not an information state, as is the case in frame-stacking or when using recurrent neural networks. In such settings, the dynamics of the agent state process is non Markovian and the standard dynamic programming based argument does not work. It is possible to find the optimal policy by viewing the POMDP with an agent-state-based controller as a decentralized control problem and using the designer's approach~\cite{MahajanPhD} to compute an optimal agent-state-based policy, as is done in~\cite{sinha2024agent}, but such an approach is intractable for all but small toy problems.


The Q-learning algorithms for POMDPs maintain a Q-table based on the agent states and actions and update the Q-values based on the samples generated by the environment. Since the agent state is non Markovian, it is not clear if such an iterative scheme converges, and if so, to what value. In the next section, we present a formal model for agent state based Q-learning when the agent also uses policy regularization.


\subsection{Regularized agent-state-based Q-learning for POMDPs} \label{eq:reg-MDP}

In this section we describe regularized agent-state-based Q-learning (RASQL), which is an online off-policy learning approach in which the agent acts according to a fixed behavioral policy to generate a sample path $(z_1, a_1, r_1, z_2, \dots)$ of agent states, actions, and rewards observed by a learning agent. We assume that the sampled rewards $r_t = r(s_t, a_t)$ are available to the agent during the learning process.


The learning agent uses a policy regularizer $\reg \colon \Delta(\ALPHABET A) \to \reals$ and
maintains a regularized Q-table, which is arbitrarily initialized and then recursively updated as follows:
\begin{align} \label{eq:reg-QL}
    &Q_{t+1} (z, a) = Q_{t} (z, a) \nonumber \\
    &\quad + \alpha_t(z, a) \left[ r_t + \gamma \regconj(Q_t(z_{t+1}, \cdot)) - Q_t(z, a)\right],
\end{align}
where the learning rate sequence $\{\alpha_t(z,a)\}_{t \ge 1}$ is chosen such that $\alpha_t(z,a) = 0$ whenever $(z,a) \neq (z_t,a_t)$. For instance, if the policy regularizer is the entropy regularizer, then the above iteration corresponds to an agent-state-based version of soft-Q-learning \cite{haarnoja2017reinforcement}. The ``greedy'' policy at each time is given by $\pi_t(\cdot \mid z) = \GRAD \regconj(Q_t(z,\cdot))$. Thus, for entropy regularization, it would correspond to soft-max based on $Q_t$.

If the $\regconj(Q_{t}(z_{t+1}, \cdot))$ term in~\eqref{eq:reg-QL} is replaced by $\max_{a' \in \ALPHABET A} Q_{t}(z_{t+1}, a')$, the iteration in RASQL corresponds to agent-state-based Q-learning (ASQL):
\begin{align} \label{eq:ASQL}
    &Q_{t+1} (z_t, a_t) = Q_{t} (z_t, a_t) \quad +\nonumber \\
    &\quad \alpha_t(z_t, a_t) \left[ r_t + \gamma \max_{a' \in \ALPHABET A} Q_t(z_{t+1}, a') - Q_t(z_t, a_t)\right].
\end{align}
The convergence of ASQL and its variations have been recently studied in~\cite{chandak2024reinforcement,sinha2024periodic,Kara2022}. However, the analysis of ASQL does not include regularization. The main result of this paper is to characterize the convergence of RASQL.

\section{Main result}

We impose the following standard assumptions on the model.
\begin{assumption}\label{ass:lr}
    For all $(z,a)$, 
    the learning rates $\{\alpha_t(z,a)\}_{t \ge 1}$ are measurable with respect to the sigma-algebra generated by $(z_{1:t}, a_{1:t})$ and satisfy $\alpha_t(z,a) = 0$ if $(z,a) \neq (z_t,a_t)$. Moreover, 
    \(
        \sum_{t \ge 1} \alpha_t(z,a) = \infty
    \)
    and
    \(
        \sum_{t \ge 1} (\alpha_t(z,a))^2 < \infty
    \), almost surely.
\end{assumption}

\begin{assumption}\label{ass:policy}
    The behavior policy $\piexpl$ is such that the Markov chain $\{(S_t, Y_t, Z_t, A_t)\}_{t \ge 1}$ converges to a limiting distribution $\zexpl$, where $\sum_{(s,y)} \zexpl(s,y,z,a) > 0$ for all $(z,a)$ (i.e., all $(z,a)$ are visited infinitely often).
\end{assumption}

Assumption~\ref{ass:lr} is the standard assumption for convergence of stochastic approximation algorithms \cite{robbins1951stochastic}.
Assumption~\ref{ass:policy} ensures persistence of excitation and is a standard assumption in convergence analysis of Q-learning \cite{watkins1992q, tsitsiklis1994asynchronous, Jaakkola1994,Kara2022,sinha2024periodic}.

For ease of notation, we will continue to use $\zexpl$ to denote the marginal and conditional distributions w.r.t.\ $\zexpl$. In particular, for marginals we use $\zexpl(y,z,a)$ to denote $\sum_{s \in \ALPHABET S} \zexpl(s,y,z,a)$ and so on; for conditionals, we use  $\zexpl(s | z,a)$ to denote $\zexpl(s,z,a) / \zexpl(z,a)$ and so on. Note that $\zexpl(s,z,y,a) = \zexpl(s,z) \piexpl (a | z) P(y|s,a)$. Thus, we have that $\zexpl(s | z, a) = \zexpl(s | z)$.

The key idea to characterize the convergence behavior is the following. Given the limiting distribution $\zexpl$, we can define an MDP with state space $\ALPHABET Z$, action space $\ALPHABET A$, and per-step reward $r_{\piexpl} \colon \ALPHABET Z \times \ALPHABET A \to \reals$ and dynamics $P_{\piexpl} \colon \ALPHABET Z \times \ALPHABET A \to \Delta(\ALPHABET Z)$ given as follows:
\begin{align}\label{eq:artificial-MDP}
    r_{\piexpl}(z,a) &\coloneqq \medop\sum_{s \in \ALPHABET S} r(s,a) \zexpl(s \mid z), \\
    P_{\piexpl}(z'|z,a) &\coloneqq \smashoperator{{\medop\sum}_{ (s,y') \in \ALPHABET S \times \ALPHABET Y}}
    \IND_{\{z' = \phi(z,y',a)\}} P(y'|s,a) \zexpl(s|z).  
\end{align}

Now consider a regularized version of this MDP, where we regularize the policy using $\reg$. Let $Q_{\piexpl}$ denote the fixed point of the regularized Bellman operator corresponding to this regularized MDP, i.e., $Q_{\piexpl}$ is the unique fixed point of the following (see the discussion in Sec.~\ref{subsec:reg-mdps}):
\begin{equation} \label{eq:pseudo-mdp-dp-q} 
    Q_{\piexpl} (z, a) = r_{\piexpl} (z, a)  +  \gamma \sum_{z' \in \ALPHABET Z} P_{\piexpl} (z' \mid z,a) \regconj(Q_{\piexpl}(z', \cdot)).
\end{equation}

Then, our main result is the following:


\begin{theorem} \label{thm:QL-POMDP}
    Under Assumptions~\ref{ass:lr} and~\ref{ass:policy}, the RASQL iteration~\eqref{eq:reg-QL} converges to $Q_{\piexpl}$ almost surely.
\end{theorem}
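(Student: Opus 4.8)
The plan is to recognize RASQL as an asynchronous stochastic-approximation scheme for the regularized Bellman operator $\mathcal{B}^{\Omega}_{\mu}$ of the artificial MDP $(\ALPHABET Z, \ALPHABET A, r_{\mu}, P_{\mu}, \gamma)$, whose unique fixed point is $Q_{\mu}$ by~\eqref{eq:pseudo-mdp-dp-q}. The first step is to verify that $\mathcal{B}^{\Omega}_{\mu}$ is a $\gamma$-contraction in the sup-norm. This follows from Proposition~\ref{prop:reg-mdp-result} applied to the artificial MDP, but the essential ingredient is that $\regconj$ is nonexpansive: since $\GRAD\regconj(q) \in \Delta(\ALPHABET A)$ by Lemma~\ref{lemma:opt-reg-pol}, the identity $\regconj(q) - \regconj(q') = \int_0^1 \langle \GRAD\regconj(q'+\tau(q-q')), q-q'\rangle\, d\tau$ yields $\lvert\regconj(q)-\regconj(q')\rvert \le \NORM{q-q'}_{\infty}$. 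Hence the only structural property of the target map that the convergence proof uses is this nonexpansiveness, so the argument parallels that for ASQL~\cite{Kara2022,sinha2024periodic} with $\max$ replaced by $\regconj$.

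Next I would rewrite the update for the coordinate $(z,a)=(z_t,a_t)$ as $Q_{t+1}(z,a) = (1-\alpha_t(z,a))Q_t(z,a) + \alpha_t(z,a)\,T_t$ with target $T_t = r_t + \gamma\regconj(Q_t(z_{t+1},\cdot))$, and decompose $T_t - \mathcal{B}^{\Omega}_{\mu}Q_t(z,a)$ against the full-information filtration $\mathcal{F}_t = \sigma(s_{1:t},y_{1:t},z_{1:t},a_{1:t})$ into a martingale-difference term $M_t = T_t - \EXP[T_t \mid \mathcal{F}_t]$ and a bias term $b_t = \EXP[T_t\mid\mathcal{F}_t] - \mathcal{B}^{\Omega}_{\mu}Q_t(z,a)$. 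Because $z_{t+1}=\phi(z_t,y_{t+1},a_t)$ and $y_{t+1}\sim P(\cdot\mid s_t,a_t)$, the conditional mean equals $r(s_t,a_t) + \gamma\sum_{y'}P(y'\mid s_t,a_t)\regconj(Q_t(\phi(z,y',a),\cdot))$, which still depends on the unobserved $s_t$; averaging this expression over $s_t \sim \zexpl(\cdot\mid z)$ reproduces exactly $\mathcal{B}^{\Omega}_{\mu}Q_t(z,a)$, which is the sense in which $\mathcal{B}^{\Omega}_{\mu}$ is the correct limiting operator.

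With this decomposition, the martingale part is controlled by standard arguments~\cite{tsitsiklis1994asynchronous,Jaakkola1994}: under Assumption~\ref{ass:lr} the squared-summable rates make $\sum_t\alpha_t(z,a)M_t$ convergent, while nonexpansiveness of $\regconj$ and boundedness of $r$ keep $\{Q_t\}$ bounded, so $M_t$ has bounded conditional variance. The contraction then drives $Q_t$ toward $Q_{\mu}$ through the usual asynchronous comparison, with Assumption~\ref{ass:policy} guaranteeing that each $(z,a)$ is updated infinitely often so that no coordinate is starved.

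The main obstacle is the bias $b_t$, which does \emph{not} vanish pointwise: at any single update time $s_t$ is one particular state, so $b_t$ measures the $O(1)$ gap between the one-sample quantity evaluated at $s_t$ and its $\zexpl(\cdot\mid z)$-average. The crux is to show that the cumulative, learning-rate-weighted effect $\sum_t \alpha_t(z,a)b_t$ is asymptotically negligible, and this is precisely where Assumption~\ref{ass:policy} enters: since the joint chain $\{(S_t,Y_t,Z_t,A_t)\}$ converges to $\zexpl$, the conditional law of $S_t$ at the update times of $(z,a)$ converges to $\zexpl(\cdot\mid z)$, so the time-averaged bias tends to zero. Rigorously this is the asymptotically-Markovian-noise regime; I would handle it either by an ergodic-averaging estimate over the random update times of each $(z,a)$, or by introducing a Poisson-equation correction for the map $s\mapsto\EXP[T_t\mid\mathcal{F}_t]$ relative to the chain's kernel, converting the Markovian bias into a further martingale difference plus telescoping terms that vanish under Assumptions~\ref{ass:lr} and~\ref{ass:policy}. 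Establishing this vanishing-bias estimate is the technically delicate part and the place where the non-Markovianity of the agent state is genuinely confronted.
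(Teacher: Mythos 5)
Your setup is sound: you identify the correct artificial MDP $(\ALPHABET Z, \ALPHABET A, r_{\piexpl}, P_{\piexpl}, \gamma)$, its regularized fixed point $Q_{\piexpl}$, and the nonexpansiveness of $\regconj$ (your gradient-integral argument is essentially the paper's argmax-substitution step). You also correctly locate the crux: the conditional-mean target depends on the hidden state $s_t$, so there is a non-vanishing Markovian bias. The genuine gap is in how you propose to close that crux. Your bias term $b_t = \EXP[T_t \mid \mathcal{F}_t] - r_{\piexpl}(z,a) - \gamma\sum_{z'}P_{\piexpl}(z'\mid z,a)\regconj(Q_t(z',\cdot))$ is a function of \emph{both} the chain state $s_t$ \emph{and} the changing iterate $Q_t$. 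The ergodic-averaging and Poisson-equation devices you invoke apply to a \emph{fixed} function of an ergodic Markov chain; with an iterate-dependent function one needs stochastic-approximation theory for controlled Markov noise (Lipschitz dependence of the Poisson solution on $Q$, uniform ergodicity, slow-variation estimates), together with care about the fact that the coordinate $(z,a)$ is updated at random times in this asynchronous off-policy setting. None of this is established in your sketch, so the step you yourself flag as "technically delicate" is exactly the step that remains open, and the tools you name do not close it as stated.

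The paper avoids this difficulty by a different decomposition: the Markov-noise terms are defined relative to the \emph{fixed point} $Q_{\piexpl}$, not the iterate. Writing $\Delta_t = Q_t - Q_{\piexpl}$, it splits $\Delta_t = X^0_t + X^1_t + X^2_t$ where the driving inputs are (i) $[r(S_t,A_t) - r_{\piexpl}(z,a)]\IND_{\{Z_t=z,A_t=a\}}$, (ii) $\gamma\bigl[\regconj(Q_{\piexpl}(Z_{t+1},\cdot)) - \sum_{z'}P_{\piexpl}(z'\mid z,a)\regconj(Q_{\piexpl}(z',\cdot))\bigr]\IND_{\{Z_t=z,A_t=a\}}$, and (iii) $\gamma\bigl[\regconj(Q_t(Z_{t+1},\cdot)) - \regconj(Q_{\piexpl}(Z_{t+1},\cdot))\bigr]\IND_{\{Z_t=z,A_t=a\}}$. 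Terms (i) and (ii) are fixed functions of the augmented chain $(S_t,Z_t,A_t,S_{t+1},Z_{t+1},A_{t+1})$ whose stationary averages vanish by the very definitions of $r_{\piexpl}$ and $P_{\piexpl}$, so off-the-shelf Markov-noise stochastic-approximation results give $X^0_t, X^1_t \to 0$ with no iterate-dependence issue. All dependence on $Q_t$ is confined to (iii), which is bounded \emph{pathwise} by $\gamma\NORM{Q_t - Q_{\piexpl}} \le \gamma\NORM{X^0_t+X^1_t} + \gamma\NORM{X^2_t}$ via precisely your nonexpansiveness observation, and a contraction-type $\epsilon$-argument then forces $X^2_t \to 0$. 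Your proof can be repaired along the same lines: split your $b_t$ further into the part evaluated at $Q_{\piexpl}$ (a fixed function, amenable to Markov averaging) plus a residual involving $\regconj(Q_t) - \regconj(Q_{\piexpl})$, which is bounded by $\gamma\NORM{\Delta_t}$ and folds into the contraction step, eliminating the need for any Poisson-equation machinery.
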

\begin{proof}
 The proof is given in appendix \ref{sec:RASQL-proof}.
\end{proof}

\begin{remark}
    Note that Proposition~\ref{prop:reg-mdp-result} implies that the ``greedy'' regularized policy with respect to the limit point of $\{Q_t\}_{t\ge 1}$ is given by
    $\pi^*(\cdot \mid z) = \GRAD \regconj(Q_{\piexpl}(z, \cdot))$, which typically lies in the interior of $\Delta(\ALPHABET A)$ for each $z$. Thus, the greedy policy is stochastic. This is a big advantage of RASQL compared to ASQL because in ASQL, the greedy policy corresponding to the limit point of the Q-learning iteration is deterministic. As shown in \cite{singh1994learning} (also see~\cite{sinha2024agent,sinha2024periodic}), in general for POMDPs with agent-state-based controllers, stochastic stationary policies can outperform deterministic stationary policies.
\end{remark}

\section{Regularized periodic Q-learning}

The idea of periodic Q-learning has been explored in \cite{sinha2024periodic}. They show that periodic policies can perform better than stationary policies when the agent state is not an information state. Regularized Q-learning can be generalized by regularized periodic Q-learning, since taking the period $L=1$ reproduces the stationary setting.

Consider the convergence properties when we consider the following regularized periodic agent-state-based Q-learning (RePASQL) update for $\ell \in [L]$.
\begin{align} \label{eq:reg-PASQL}
    &Q^{\ell}_{t+1} (z, a) = Q^{\ell}_{t} (z, a) \nonumber \\
    &\quad + \alpha^{\ell}_t(z, a) \left[ r_t + \gamma \regconj(Q^{\MOD{\ell+1}}_t(z', \cdot)) - Q^{\ell}_t(z, a)\right].
\end{align}

\begin{assumption}\label{ass:periodic-lr}
    For all $(\ell, z,a)$, 
    the learning rates $\{\alpha^{\ell}_t(z,a)\}_{t \ge 1}$ are measurable with respect to the sigma-algebra generated by $(z_{1:t}, a_{1:t})$ and satisfy $\alpha^{\ell}_t(z,a) = 0$ if $(\ell, z,a) \neq (\MOD{t},z_t,a_t)$. Moreover, 
    \(
        \sum_{t \ge 1} \alpha^{\ell}_t(z,a) = \infty
    \)
    and
    \(
        \sum_{t \ge 1} (\alpha^{\ell}_t(z,a))^2 < \infty
    \), almost surely.
\end{assumption}

\begin{assumption}\label{ass:periodic-policy}
    The behavior/exploration policy $\piexpl = \{ \piexpl^{\ell} \}_{\ell \in [L]}$ is such that the Markov chain $\{(S_t, Y_t, Z_t, A_t)\}_{t \ge 1}$ converges to a limiting periodic distribution $\zexpl^{\ell}$, where $\sum_{(s,y)} \zexpl^{\ell}(s,y,z,a) > 0$ for all $(\ell,z,a)$ (i.e., all $(\ell,z,a)$ are visited infinitely often).
\end{assumption}
By considering this limiting distribution w.r.t. the original  model's rewards and dynamics, we can construct an artificial MDP on the agent state for each $\ell \in [L]$, which has the following rewards and dynamics:
\begin{align}\label{eq:artificial-MDP-periodic}
    r^{\ell}_{\piexpl}(z,a) &\coloneqq \medop\sum_{s \in \ALPHABET S} r(s,a) \zexpl^{\ell}(s \mid z), \\
    P^{\ell}_{\piexpl}(z'|z,a) &\coloneqq \smashoperator{{\medop\sum}_{ (s,y') \in \ALPHABET S \times \ALPHABET Y}}
    \IND_{\{z' = \phi(z,y',a)\}} P(y'|s,a) \zexpl^{\ell}(s|z).  
\end{align}

Now we can extend the same techniques used in regularized MDPs \ref{subsec:reg-mdps} to this by defining a regularized Bellman operator $\regBellman^{\ell}$ on an arbitrary Q-function $Q \in \reals^{\lvert \ALPHABET Z \rvert \times \lvert \ALPHABET A \rvert}$ as follows:
\begin{equation}
    \regBellman^{\ell} Q(z, a) = r^{\ell}_{\piexpl} (z, a) + \gamma \sum_{z' \in \ALPHABET Z} P^{\ell}_{\piexpl} (z' \mid z,a) \regconj(Q(z', \cdot)).
\end{equation}

Next define the composition of the sequence of $L$ Bellman operators corresponding to cycle $\ell$ as is done in \cite{sinha2024periodic}.
\begin{equation}
    \periodicregBellman^{\ell} = \regBellman^{\ell} \regBellman^{\MOD{\ell+1}} \cdots \regBellman^{\MOD{\ell+L-1}}.
\end{equation}

Then we can apply Prop.~\ref{prop:reg-mdp-result} to $\periodicregBellman^{\ell}$. In addition, considering the periodicity of the operators, the same approach followed in \cite{sinha2024periodic} can be used to show that $\periodicregBellman^{\ell}$ is a contraction and therefore has a unique fixed point denoted by $Q^{\ell}_{\piexpl}$ which is given by 
    \begin{equation}
    Q^{\ell}_{\piexpl} (z, a) = r^{\ell}_{\piexpl} (z, a) + \gamma \sum_{z' \in \ALPHABET Z} P^{\ell}_{\piexpl} (z' \mid z,a) V^{\MOD{\ell+1}}_{\piexpl} (z'). \label{eq:pseudo-mdp-dp-q-periodic}
    \end{equation}

\begin{theorem} \label{thm:PRASQL}
Under Assumptions~\ref{ass:periodic-lr} and~\ref{ass:periodic-policy}, the RePASQL iteration~\eqref{eq:reg-PASQL} converges to $\{Q^{\ell}_{\piexpl}\}_{\ell \in [L]}$ almost surely.
\end{theorem}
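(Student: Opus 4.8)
The plan is to recast RePASQL as a single asynchronous stochastic approximation recursion on the stacked iterate $\mathbf{Q}_t = (Q^0_t, \dots, Q^{L-1}_t)$ and then essentially reuse the machinery developed for Theorem~\ref{thm:QL-POMDP}. Define the stacked operator $\mathbf{T}$ on $(\reals^{\lvert \ALPHABET Z \rvert \times \lvert \ALPHABET A \rvert})^L$ by $(\mathbf{T}\mathbf{Q})^\ell = \regBellman^\ell Q^{\MOD{\ell+1}}$. The fixed-point relation~\eqref{eq:pseudo-mdp-dp-q-periodic} says precisely that $\{Q^\ell_{\piexpl}\}$ is a fixed point of $\mathbf{T}$. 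The first step is to observe that $\mathbf{T}$ is a $\gamma$-contraction in the block max-norm $\NORM{\mathbf{Q}}_\infty = \max_\ell \NORM{Q^\ell}_\infty$: since each $\regconj$ is a non-expansion in the sup-norm (so each $\regBellman^\ell$ is a $\gamma$-contraction by Proposition~\ref{prop:reg-mdp-result}) and the index shift $\ell \mapsto \MOD{\ell+1}$ merely permutes blocks, we get $\NORM{\mathbf{T}\mathbf{Q} - \mathbf{T}\mathbf{Q}'}_\infty \le \gamma \NORM{\mathbf{Q} - \mathbf{Q}'}_\infty$. Hence $\mathbf{T}$ has a unique fixed point; unrolling $\mathbf{T}\mathbf{Q}=\mathbf{Q}$ over one period recovers $Q^\ell = \periodicregBellman^\ell Q^\ell$, so this fixed point is exactly $\{Q^\ell_{\piexpl}\}$, consistent with the contraction of $\periodicregBellman^\ell$.

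Next I would identify the conditional expectation of the RePASQL target. At a time $t$ with $\MOD{t} = \ell$ and $(z_t, a_t) = (z,a)$, the realized target is $r_t + \gamma \regconj(Q^{\MOD{\ell+1}}_t(z_{t+1}, \cdot))$. The key step---the one that handles the non-Markovianity of the agent state---is to show that, along the cyclostationary limit guaranteed by Assumption~\ref{ass:periodic-policy}, the conditional law of $s_t$ given $(z_t=z,\MOD{t}=\ell)$ converges to $\zexpl^\ell(s \mid z)$ and the conditional law of $z_{t+1}$ given $(z_t=z,a_t=a,\MOD{t}=\ell)$ converges to $P^\ell_{\piexpl}(\cdot \mid z,a)$. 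Consequently the conditional mean of the target converges to $r^\ell_{\piexpl}(z,a) + \gamma \sum_{z'} P^\ell_{\piexpl}(z' \mid z,a)\regconj(Q^{\MOD{\ell+1}}_t(z',\cdot)) = (\regBellman^\ell Q^{\MOD{\ell+1}}_t)(z,a) = (\mathbf{T}\mathbf{Q}_t)^\ell(z,a)$. This is the exact periodic analogue of the averaging argument in the proof of Theorem~\ref{thm:QL-POMDP}; the only addition is the bookkeeping of the phase index $\ell$, for which the per-phase limits $\zexpl^\ell$ in Assumption~\ref{ass:periodic-policy} supply the required cyclostationary convergence.

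I would then write the recursion in standard stochastic approximation form, decomposing the target into its asymptotic conditional mean $(\mathbf{T}\mathbf{Q}_t)^\ell$, a martingale-difference noise term with conditionally bounded variance, and an asymptotically vanishing bias arising from the finite-time gap between the actual sampling distribution and its cyclostationary limit. Assumption~\ref{ass:periodic-lr} supplies the Robbins--Monro step-size conditions and the constraint that only the block $\ell = \MOD{t}$ at $(z_t,a_t)$ is updated, while Assumption~\ref{ass:periodic-policy} guarantees that every triple $(\ell,z,a)$ is visited infinitely often, giving persistence of excitation for this doubly-asynchronous update. Together with the contraction property of $\mathbf{T}$, these are exactly the hypotheses of the asynchronous contractive stochastic-approximation theorem invoked for Theorem~\ref{thm:QL-POMDP}, so the stacked iterate $\mathbf{Q}_t$ converges almost surely to $\{Q^\ell_{\piexpl}\}$, as in~\cite{sinha2024periodic}.

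I expect the main obstacle to be the same as in the stationary case: rigorously controlling the averaging noise when the driving process $\{(S_t, Y_t, Z_t, A_t)\}$ is non-Markovian in the agent state and only asymptotically cyclostationary, so that the bias term is negligible against the step sizes. The periodic setting adds the complication that this control must be carried out phase-by-phase and that the update is asynchronous in both the phase index $\ell$ and the pair $(z,a)$; verifying that infinitely-often updates of each block suffice to propagate convergence through the cyclic coupling $(\mathbf{T}\mathbf{Q})^\ell = \regBellman^\ell Q^{\MOD{\ell+1}}$ is the delicate part, though it becomes routine once $\mathbf{T}$ is shown to be a block max-norm contraction.
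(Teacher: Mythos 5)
Your stacked-operator reduction and the block max-norm contraction of $\mathbf{T}$ are fine, and they match the role played by $\periodicregBellman^{\ell}$ in the paper. The gap is in the stochastic-approximation step. You decompose the target into an ``asymptotic conditional mean'' $(\mathbf{T}\mathbf{Q}_t)^{\ell}$, a martingale-difference noise, and a bias that you claim vanishes because the sampling distribution approaches its cyclostationary limit. That decomposition fails here: for the noise to be a martingale difference you must condition on the filtration generated by $(z_{1:t},a_{1:t})$, and conditioned on that filtration the law of the hidden state $S_t$ is the history-dependent belief, which does \emph{not} converge to $\zexpl^{\ell}(\cdot \mid z_t)$ --- Assumption~\ref{ass:periodic-policy} only gives convergence of the \emph{marginal} (cyclostationary) law. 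The discrepancy between the belief and $\zexpl^{\ell}(\cdot \mid z_t)$ persists for all time and is negligible only in an ergodic-average sense. Worse, the term whose average must vanish involves the time-varying iterate $Q^{\MOD{\ell+1}}_t$, whereas the averaging results actually available (Prop.~4 of \cite{sinha2024periodic}, Thm.~2.7 of \cite{prashanth2024gradient}) apply only to a \emph{fixed} function of the underlying Markov chain $W_t$. So the ``asynchronous contractive stochastic-approximation theorem'' you invoke --- whose hypotheses require the conditional mean given the filtration to equal a contraction applied to the iterate plus vanishing bias --- is not applicable, and it also mischaracterizes what the paper uses for Theorem~\ref{thm:QL-POMDP}; repairing your route would require Poisson-equation-style Markov-noise machinery that you do not supply.

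The missing idea is to center the error decomposition at the \emph{fixed point} rather than at the current iterate. The paper writes $\Delta^{\ell}_t = Q^{\ell}_t - Q^{\ell}_{\piexpl}$ and splits it into three components driven by $U^{\ell,0}_t$ (reward noise), $U^{\ell,1}_t$ (transition noise evaluated at $Q^{\MOD{\ell+1}}_{\piexpl}$), and $U^{\ell,2}_t = \gamma\bigl[\regconj(Q^{\MOD{\ell+1}}_t(Z_{t+1},\cdot)) - \regconj(Q^{\MOD{\ell+1}}_{\piexpl}(Z_{t+1},\cdot))\bigr]$. Because $U^{\ell,0}_t$ and $U^{\ell,1}_t$ are fixed functions of the chain $W_t = (S_t,Z_t,A_t,S_{t+1},Z_{t+1},A_{t+1})$, the periodic ergodic-averaging result applies directly and those components vanish almost surely. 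The remaining iterate-dependent component $U^{\ell,2}_t$ needs no distributional facts at all: by the argmax-exchange argument (via Lemma~\ref{lemma:opt-reg-pol}), it is bounded by $\gamma\NORM{Q^{\MOD{\ell+1}}_t - Q^{\MOD{\ell+1}}_{\piexpl}}$, and a deterministic contraction argument with $\kappa = \gamma(1+1/C) < 1$ drives it to zero once the other two components are small. Without this fixed-point-centered split, your plan has no tool that can absorb the persistent belief-state bias, so the step you flag at the end as ``routine once $\mathbf{T}$ is shown to be a block max-norm contraction'' is precisely the step your framework cannot complete.
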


\begin{proof}
The proof is given in appendix \ref{sec:PRASQL-proof}.
\end{proof}
\section{Numerical example}

In this section, we present an example to highlight the salient features of our results. First, we describe the POMDP model.

\subsection{POMDP model}

Consider a POMDP with $\ALPHABET S = \{ 0, 1, 2, 3\}, \ALPHABET A = \{ 0, 1 \}, \ALPHABET Y = \{ 0, 1\}$ and $\gamma = 0.9$. The start state distribution is given by
\[
\rho(s) = \MATRIX{0.3, 0.0, 0.2, 0.5}
\]

Now consider the reward and transitions when $a=0$:
\begin{align*}
	r(s, 0) &= (1-\gamma)\times \MATRIX{0.6, 0.0, 0.5, -0.3} \\
    P(s' \mid s, 0) &= \MATRIX{0.0 & 0.6 & 0.4 & 0.0 \\ 0.8 & 0.0 & 0.2 & 0.0 \\ 0.7 & 0.3 & 0.0 & 0.0 \\ 0.2 & 0.0 & 0.0 & 0.8}.
\end{align*}
Note that $s, s'$ (state, next state) corresponds to the rows, columns of $P$, respectively. Next, when $a=1$
\begin{align*}
    r(s, 1) &= (1-\gamma)\times \MATRIX{0.1, -0.3, -0.2, 0.5} \\
    P(s' \mid s, 1) &= \MATRIX{0.8 & 0.2 & 0.0 & 0.0 \\ 0.4 & 0.0 & 0.6 & 0.0 \\ 0.0 & 0.8 & 0.2 & 0.0 \\ 0.1 & 0.7 & 0.2 & 0.0}.
\end{align*}
Finally, we have the observations function which maps $s=\{0, 3\}$ to $y=0$ and $s=\{1, 2\}$ to $y=1$.

\subsection{Regularized agent-state-based Q-learning (RASQL) experiment}

For the purpose of providing a simple illustration in this example, we fix the agent state to be the observation of the agent, i.e., $z_t=y_t$. However, in general the theoretical results hold for the general agent-state update rule given in \eqref{eq:agent-state-update}. Consider the following fixed exploration policy:
\[
\piexpl(a \mid z) = \MATRIX{0.2 & 0.8 \\ 0.8 & 0.2}.
\]
Note that $z,a$ (observation, action) corresponds to the rows, columns of $\piexpl$, respectively.

Using $\piexpl$, we run $25$ random seeds on the given POMDP and we perform the RASQL update \eqref{eq:reg-QL} with a regularization coefficient $(\beta) = 1.0$ for $10^5$ timesteps/iterations. We plot the median and quartiles from $25$ seeds of the iterates $\{Q_{t}(z, a)\}_{t \geq 1}$ for each $(z, a)$ pair as well as their corresponding theoretical limits $Q_{\piexpl}(z, a)$ (computed using Theorem~\ref{thm:QL-POMDP}) are shown in Fig.~\ref{fig:rasql-plots}. The salient features of these results are as follows:
\begin{itemize}
    \item RASQL converges to the theoretical limit predicted by Theorem~\ref{thm:QL-POMDP}.
    \item The limit $Q_{\piexpl}$ depends on the exploration policy $\piexpl$.
\end{itemize}

Thus, it can be seen from this example that we can precisely characterize the limits of convergence when using regularized Q-learning with an agent-state-based representation.

\begin{figure}
    \centering
    \includegraphics[width=\linewidth]{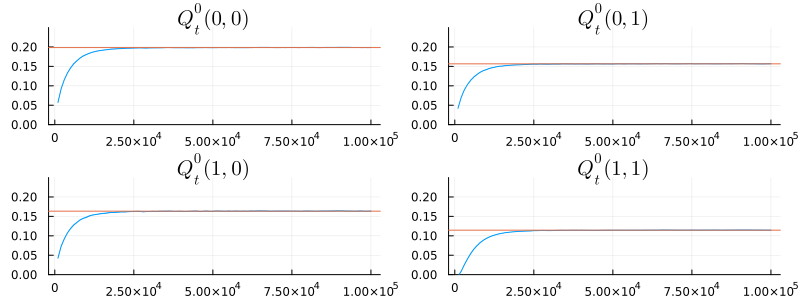}
    \caption{RASQL convergence: Q-values vs. number of iterations. Blue: RASQL iterates, Red: Theoretical limit from Theorem~\ref{thm:QL-POMDP}.}
    \label{fig:rasql-plots}
\end{figure}

\subsection{Regularized periodic agent-state-based Q-learning (RePASQL) experiment}

Similar to the RASQL experiment, we fix the agent state to be the observation of the agent, i.e., $z_t=y_t$. Consider the following fixed periodic exploration policy for period $L=2$:
\[
\piexpl^{0}(a \mid z) = \MATRIX{0.2 & 0.8 \\ 0.8 & 0.2}, \quad
\piexpl^{1}(a \mid z) = \MATRIX{0.8 & 0.2 \\ 0.2 & 0.8}.
\]

Using $\piexpl^\ell$, we run $25$ random seeds on the given POMDP and we perform the RePASQL update \eqref{eq:reg-PASQL} with a regularization coefficient $(\beta) = 1.0$ for $10^5$ timesteps/iterations. We plot the median and quartiles from $25$ seeds of the iterates $\{Q^{\ell}_{t}(z, a)\}_{t \geq 1}$ for each $(\ell, z, a)$ pair as well as their corresponding theoretical limits $Q^{\ell}_{\piexpl}(z, a)$ (computed using Theorem~\ref{thm:PRASQL}) are shown in Fig.~\ref{fig:prasql-plots}. The salient features of these results are as follows:
\begin{itemize}
    \item RePASQL converges to the theoretical limit predicted by Theorem~\ref{thm:PRASQL}.
    \item The limits $\{Q^{\ell}_{\piexpl}\}_{\ell \in [L]}$ depend on the periodic exploration policy $\{\piexpl^{\ell}\}_{\ell \in [L]}$.
\end{itemize}
Thus, it can be seen from this example that we can precisely characterize the limits of convergence.
\begin{figure}
    \centering
    \includegraphics[width=\linewidth]{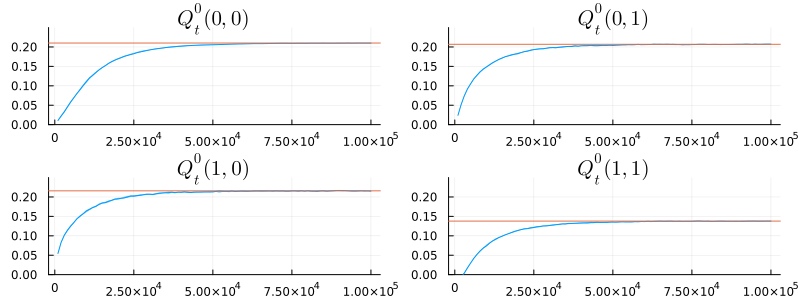}
    \includegraphics[width=\linewidth]{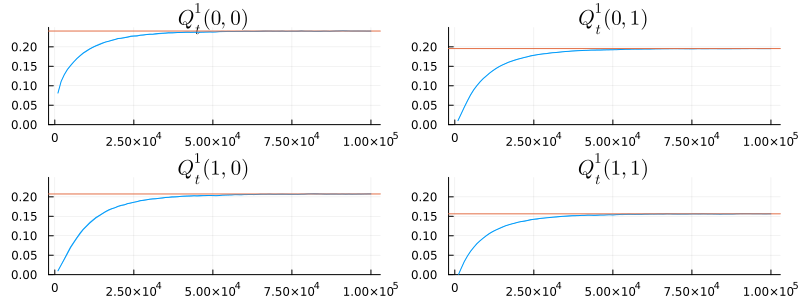}
    \caption{
    RePASQL convergence: Q-values vs. number of iterations. Blue: RePASQL iterates, Red: Theoretical limit from Theorem~\ref{thm:PRASQL}
    }
    \label{fig:prasql-plots}
\end{figure}

\section{Conclusions}

In this work, we present theoretical results on the convergence of regularized agent-state-based Q-learning (RASQL) under some standard assumptions from the literature. In particular, we show that: $1)$ RASQL converges and $2)$ we characterize the solution that RASQL converges to as a function of the model parameters and the choice of exploration policy. We illustrate these ideas on a small POMDP example and show that the Q-learning iterates of RASQL matches with the calculated theoretical limit. We also generalize these ideas to the periodic setting and demonstrate the theoretical and empirical convergence of RePASQL. Thus, in doing so we are able to understand how regularization works when combined with Q-learning for POMDPs that have an agent state that is not an information state.

A noteworthy issue with RASQL/RePASQL is that it inherits the limitations of its predecessor approaches of ASQL and PASQL. In particular, while we are able to prove convergence and characterize the converged solution in RASQL/RePASQL, we cannot guarantee the convergence to the optimal agent-state-based solution and this largely depends on the choice of exploration policy and the POMDP dynamics. Even so, seeing how regularization is an important component in several empirical works concerning POMDPs with agent states that are not an information state, we find it useful to establish some useful theoretical properties on the convergence of such algorithms.

\bibliographystyle{IEEEtran}
\bibliography{references}  

\begin{thebibliography}{10}
\providecommand{\url}[1]{#1}
\csname url@samestyle\endcsname
\providecommand{\newblock}{\relax}
\providecommand{\bibinfo}[2]{#2}
\providecommand{\BIBentrySTDinterwordspacing}{\spaceskip=0pt\relax}
\providecommand{\BIBentryALTinterwordstretchfactor}{4}
\providecommand{\BIBentryALTinterwordspacing}{\spaceskip=\fontdimen2\font plus
\BIBentryALTinterwordstretchfactor\fontdimen3\font minus \fontdimen4\font\relax}
\providecommand{\BIBforeignlanguage}[2]{{%
\expandafter\ifx\csname l@#1\endcsname\relax
\typeout{** WARNING: IEEEtran.bst: No hyphenation pattern has been}%
\typeout{** loaded for the language `#1'. Using the pattern for}%
\typeout{** the default language instead.}%
\else
\language=\csname l@#1\endcsname
\fi
#2}}
\providecommand{\BIBdecl}{\relax}
\BIBdecl

\bibitem{aastrom1965optimal}
K.~J. {\AA}str{\"o}m, ``Optimal control of {M}arkov processes with incomplete state information {I},'' \emph{Journal of Mathematical Analysis and Applications}, vol.~10, pp. 174--205, 1965.

\bibitem{smallwood1973optimal}
R.~D. Smallwood and E.~J. Sondik, ``The optimal control of partially observable {M}arkov processes over a finite horizon,'' \emph{Operations Research}, vol.~21, no.~5, pp. 1071--1088, 1973.

\bibitem{subramanian2022approximate}
J.~Subramanian, A.~Sinha, R.~Seraj, and A.~Mahajan, ``Approximate information state for approximate planning and reinforcement learning in partially observed systems,'' \emph{J. Mach. Learn. Res.}, vol.~23, no.~12, pp. 1--83, 2022.

\bibitem{hausknecht2015deep}
M.~J. Hausknecht and P.~Stone, ``Deep recurrent {Q}-learning for partially observable {MDP}s.'' in \emph{AAAI Fall Symposia}, vol.~45, 2015, p. 141.

\bibitem{igl2018deep}
M.~Igl, L.~Zintgraf, T.~A. Le, F.~Wood, and S.~Whiteson, ``Deep variational reinforcement learning for {POMDP}s,'' in \emph{Int. Conf. Mach. Learn.}\hskip 1em plus 0.5em minus 0.4em\relax PMLR, 2018, pp. 2117--2126.

\bibitem{zhu2017improving}
P.~Zhu, X.~Li, P.~Poupart, and G.~Miao, ``On improving deep reinforcement learning for {POMDP}s,'' \emph{arXiv:1704.07978}, 2017.

\bibitem{meng2021memory}
L.~Meng, R.~Gorbet, and D.~Kuli{\'c}, ``Memory-based deep reinforcement learning for {POMDP}s,'' in \emph{Int. Conf. Intell. Robots Syst.}\hskip 1em plus 0.5em minus 0.4em\relax IEEE, 2021, pp. 5619--5626.

\bibitem{dong2022simple}
S.~Dong, B.~Van~Roy, and Z.~Zhou, ``Simple agent, complex environment: Efficient reinforcement learning with agent states,'' \emph{J. Mach. Learn. Res.}, vol.~23, no. 255, pp. 1--54, 2022.

\bibitem{sinha2024agent}
A.~Sinha and A.~Mahajan, ``Agent-state based policies in {POMDP}s: Beyond belief-state {MDP}s,'' \emph{Conference on Decision and Control}, 2024.

\bibitem{Jaakkola1994}
T.~Jaakkola, S.~Singh, and M.~Jordan, ``Reinforcement learning algorithm for partially observable {Markov} decision problems,'' in \emph{Adv. Neural Inf. Process. Syst.}, vol.~7.\hskip 1em plus 0.5em minus 0.4em\relax MIT Press, 1994, pp. 345--352.

\bibitem{Kara2022}
A.~D. Kara and S.~Yüksel, ``Convergence of finite memory {Q} learning for {POMDPs} and near optimality of learned policies under filter stability,'' \emph{Mathematics of Operations Research}, Nov. 2022.

\bibitem{sinha2024periodic}
A.~Sinha, M.~Geist, and A.~Mahajan, ``Periodic agent-state based {Q}-learning for {POMDP}s,'' \emph{Adv. Neural Inf. Process. Syst.}, 2024.

\bibitem{Kara2024}
A.~Devran~Kera and S.~Y\"{u}ksel, ``{Q}-learning for stochastic control under general information structures and non-markovian environments,'' \emph{Transactions on Machine Learning Research}, 2024.

\bibitem{chandak2024reinforcement}
S.~Chandak, P.~Shah, V.~S. Borkar, and P.~Dodhia, ``Reinforcement learning in non-{M}arkovian environments,'' \emph{Systems \& Control Letters}, vol. 185, p. 105751, 2024.

\bibitem{konda2003onactor}
V.~R. Konda and J.~N. Tsitsiklis, ``On actor-critic algorithms,'' \emph{SIAM J. Control Optim.}, vol.~42, no.~4, pp. 1143--1166, 2003.

\bibitem{schulman2017proximal}
J.~Schulman, F.~Wolski, P.~Dhariwal, A.~Radford, and O.~Klimov, ``Proximal policy optimization algorithms,'' \emph{arXiv:1707.06347}, 2017.

\bibitem{haarnoja2018sac}
T.~Haarnoja, A.~Zhou, P.~Abbeel, and S.~Levine, ``Soft actor-critic: Off-policy maximum entropy deep reinforcement learning with a stochastic actor,'' in \emph{Int. Conf. Mach. Learn.}, 2018, pp. 1861--1870.

\bibitem{ahmed2019understanding}
Z.~Ahmed, N.~Le~Roux, M.~Norouzi, and D.~Schuurmans, ``Understanding the impact of entropy on policy optimization,'' in \emph{Int. Conf. Mach. Learn.}\hskip 1em plus 0.5em minus 0.4em\relax PMLR, 2019, pp. 151--160.

\bibitem{henderson2018deep}
P.~Henderson, R.~Islam, P.~Bachman, J.~Pineau, D.~Precup, and D.~Meger, ``Deep reinforcement learning that matters,'' in \emph{AAAI Conference on Artificial Intelligence}, 2018.

\bibitem{peters2010relative}
J.~Peters, K.~Mulling, and Y.~Altun, ``Relative entropy policy search,'' in \emph{AAAI Conference on Artificial Intelligence}, 2010, pp. 1607--1612.

\bibitem{schulman2015trust}
J.~Schulman, S.~Levine, P.~Abbeel, M.~Jordan, and P.~Moritz, ``Trust region policy optimization,'' in \emph{Int. Conf. Mach. Learn.}\hskip 1em plus 0.5em minus 0.4em\relax PMLR, 2015, pp. 1889--1897.

\bibitem{neu2017unified}
G.~Neu, A.~Jonsson, and V.~G{\'o}mez, ``A unified view of entropy-regularized {M}arkov decision processes,'' \emph{Adv. Neural Inf. Process. Syst.}, 2017.

\bibitem{geist2019theory}
M.~Geist, B.~Scherrer, and O.~Pietquin, ``A theory of regularized {M}arkov decision processes,'' in \emph{Int. Conf. Mach. Learn.}\hskip 1em plus 0.5em minus 0.4em\relax PMLR, 2019.

\bibitem{hafner2019learning}
D.~Hafner, T.~Lillicrap, I.~Fischer, R.~Villegas, D.~Ha, H.~Lee, and J.~Davidson, ``Learning latent dynamics for planning from pixels,'' in \emph{Int. Conf. Mach. Learn.}\hskip 1em plus 0.5em minus 0.4em\relax PMLR, 2019, pp. 2555--2565.

\bibitem{zhang2019solar}
M.~Zhang, S.~Vikram, L.~Smith, P.~Abbeel, M.~Johnson, and S.~Levine, ``Solar: Deep structured representations for model-based reinforcement learning,'' in \emph{Int. Conf. Mach. Learn.}\hskip 1em plus 0.5em minus 0.4em\relax PMLR, 2019, pp. 7444--7453.

\bibitem{hafner2019dream}
D.~Hafner, T.~Lillicrap, J.~Ba, and M.~Norouzi, ``Dream to control: Learning behaviors by latent imagination,'' \emph{Int. Conf. Learn. Represent.}, 2020.

\bibitem{ni2024bridging}
T.~Ni, B.~Eysenbach, E.~Seyedsalehi, M.~Ma, C.~Gehring, A.~Mahajan, and P.-L. Bacon, ``Bridging state and history representations: Understanding self-predictive {RL},'' \emph{Int. Conf. Learn. Represent.}, 2024.

\bibitem{delecki2024entropy}
H.~Delecki, M.~Vazquez-Chanlatte, E.~Yel, K.~Wray, T.~Arnon, S.~Witwicki, and M.~J. Kochenderfer, ``Entropy-regularized point-based value iteration,'' \emph{arXiv:2402.09388}, 2024.

\bibitem{somani2013despot}
A.~Somani, N.~Ye, D.~Hsu, and W.~S. Lee, ``D{ESPOT}: Online {POMDP} planning with regularization,'' \emph{Adv. Neural Inf. Process. Syst.}, 2013.

\bibitem{molloy2023smoother}
T.~L. Molloy and G.~N. Nair, ``Smoother entropy for active state trajectory estimation and obfuscation in {POMDP}s,'' \emph{IEEE Transactions on Automatic Control}, vol.~68, no.~6, pp. 3557--3572, 2023.

\bibitem{zamboni2024limits}
R.~Zamboni, D.~Cirino, M.~Restelli, and M.~Mutti, ``The limits of pure exploration in {POMDP}s: When the observation entropy is enough,'' \emph{Reinforcement Learning Journal}, 2024.

\bibitem{rockafellar2009variational}
R.~T. Rockafellar and R.~J.-B. Wets, \emph{Variational analysis}.\hskip 1em plus 0.5em minus 0.4em\relax Springer Science \& Business Media, 2009, vol. 317.

\bibitem{hiriart2004fundamentals}
J.-B. Hiriart-Urruty and C.~Lemar{\'e}chal, \emph{Fundamentals of convex analysis}.\hskip 1em plus 0.5em minus 0.4em\relax Springer Science \& Business Media, 2004.

\bibitem{mensch2018differentiable}
A.~Mensch and M.~Blondel, ``Differentiable dynamic programming for structured prediction and attention,'' in \emph{Int. Conf. Mach. Learn.}\hskip 1em plus 0.5em minus 0.4em\relax PMLR, 2018, pp. 3462--3471.

\bibitem{MahajanPhD}
A.~Mahajan, ``Sequential decomposition of sequential dynamic teams: applications to real-time communication and networked control systems,'' Ph.D. dissertation, U. Michigan, Ann Arbor, MI, 2008.

\bibitem{haarnoja2017reinforcement}
T.~Haarnoja, H.~Tang, P.~Abbeel, and S.~Levine, ``Reinforcement learning with deep energy-based policies,'' in \emph{Int. Conf. Mach. Learn.}\hskip 1em plus 0.5em minus 0.4em\relax PMLR, 2017, pp. 1352--1361.

\bibitem{robbins1951stochastic}
H.~Robbins and S.~Monro, ``A stochastic approximation method,'' \emph{The Annals of Mathematical Statistics}, pp. 400--407, 1951.

\bibitem{watkins1992q}
C.~J. Watkins and P.~Dayan, ``{Q}-learning,'' \emph{Machine Learning}, vol.~8, pp. 279--292, 1992.

\bibitem{tsitsiklis1994asynchronous}
J.~N. Tsitsiklis, ``Asynchronous stochastic approximation and {Q}-learning,'' \emph{Machine Learning}, vol.~16, pp. 185--202, 1994.

\bibitem{singh1994learning}
S.~P. Singh, T.~Jaakkola, and M.~I. Jordan, ``Learning without state-estimation in partially observable {M}arkovian decision processes,'' in \emph{Machine Learning Proceedings 1994}.\hskip 1em plus 0.5em minus 0.4em\relax Elsevier, 1994, pp. 284--292.

\bibitem{prashanth2024gradient}
L.~Prashanth, S.~Bhatnagar \emph{et~al.}, ``Gradient-based algorithms for zeroth-order optimization,'' \emph{Foundations and Trends{\textregistered} in Optimization}, vol.~8, no. 1--3, pp. 1--332, 2025.

\end{thebibliography}

\cleardoublepage

\appendix

\subsection{Proof of Theorem~\ref{thm:QL-POMDP}} \label{sec:RASQL-proof}

The proof argument for Theorem~\ref{thm:QL-POMDP} is similar to the proof argument given in \cite{Jaakkola1994,Kara2022,Kara2024,sinha2024periodic}. 

Define an error function between the converged value and the Q-learning iteration $\Delta_{t+1} \coloneqq Q_{t+1} - Q_{\piexpl}$. Then, combine \eqref{eq:reg-QL}, \eqref{eq:pseudo-mdp-dp-q} and \eqref{eq:artificial-MDP} as follows for all $(z,a)$.
\begin{align}
    &\Delta_{t+1}(z, a) = Q_{t+1}(z, a) - Q_{\piexpl}(z, a) \nonumber \\
    &\quad= (1 - \alpha_t(z, a)) \Delta_t(z, a) \nonumber \\
    &\qquad + \alpha_t(z, a) \left[ U^{0}_t(z, a) + U^{1}_t(z, a) + U^{2}_t(z, a) \right], \label{eq:QL-update-diff}
\end{align}
where
\begin{align*}
\useshortskip
    & U^{0}_t(z, a) \coloneqq \left[ r(S_t, A_t) - r_{\piexpl} (z, a) \right] \IND_{\{ Z_t = z, A_t = a\}},\\
    & U^{1}_t(z, a) \coloneqq \left[ \vphantom{\sum_{A}} \gamma \regconj(Q_{\piexpl}(Z_{t+1}, \cdot)) \right. \\
    & - \left. \gamma \sum_{z' \in \ALPHABET Z} P_{\piexpl} (z' \mid z, a) \regconj(Q_{\piexpl}(z', \cdot)) \right] \IND_{\{ Z_t = z, A_t = a\}}, \\
    & U^{2}_t(z, a) \coloneqq \\
    &\quad \left[ \gamma \regconj (Q_{t}(Z_{t+1}, \cdot)) - \gamma \regconj(Q_{\piexpl}(Z_{t+1}, \cdot)) \right]
    \IND_{\{ Z_t = z, A_t = a\}}.
\end{align*}

Note that we are adding the term $\gamma \regconj(Q_{\piexpl}(Z_{t+1}, \cdot)) \IND_{\{ Z_t = z, A_t = a\}}$ in $U^{1}_t(z, a)$ and subtracting it in $U^{2}_t(z, a)$. We can now view \eqref{eq:QL-update-diff} as a linear system with state $\Delta_{t}$ and three inputs $U^{0}_t(z, a), U^{1}_t(z, a)$ and $U^{2}_t(z, a)$. Using the linearity, we can now split the state into three components $\Delta_{t+1} = X^{0}_{t+1} + X^{1}_{t+1} + X^{2}_{t+1}$, where the components evolve as follows for $i \in \{0, 1, 2\}$:
\begin{align*}
    X^{i}_{t+1} (z, a) = (1 - \alpha_t(z, a)) X^{i}_{t} (z, a) + \alpha_t(z, a) U^{i}_t(z, a).
\end{align*}
We will now separately show each $\NORM{X^{i}_{t}} \to 0$.

\paragraph{Convergence of component $X^{0}_t$}

The proof for the convergence of component $X^{0}_t$ is similar to that given in \cite{sinha2024periodic}.

\paragraph{Convergence of component $X^{1}_t$}

The proof for the convergence of component $X^{1}_t$ is based on the argument given in \cite{sinha2024periodic}. Let $W_t$ denote the tuple $(S_t,Z_t,A_t,S_{t+1},Z_{t+1},A_{t+1})$. Note that $\{W_t\}_{t\ge1}$ is a Markov chain and converges to a limiting distribution $\bar \zeta_{\piexpl}$, where 
\begin{align*}
   &\bar \zeta_{\piexpl}(s,z,a,s',z',a') \\
   &\quad = \zexpl(s,z,a) \sum_{y' \in \ALPHABET Y} P(s',y'|s,a) \IND_{\{z' = \phi(z,y',a)\}} \piexpl(a'|z'). 
\end{align*}
We use $\bar \zeta_{\piexpl}(s,z,a, \mathcal S, \mathcal Z, \mathcal A)$ to denote the marginalization over the ``future states'' and a similar notation for other marginalizations. Note that $\bar \zeta_{\piexpl}(s,z,a, \mathcal S, \mathcal Z, \mathcal A) = \zexpl(s,z,a)$. 

Define $V_{t}$ as the value function associated with $Q_t$, i.e., $V_t (z) \coloneqq \regconj(Q_t(z, \cdot))$. 
Fix $(z_\circ,a_\circ) \in \times \ALPHABET Z \times \ALPHABET A$ and define
\begin{align}
    h_P( W_t; z_\circ,a_\circ) &= \Bigl[ \gamma V_{\piexpl}(Z_{t+1}) - \nonumber \\
        &\gamma \sum_{\bar z \in \ALPHABET Z} P_{\piexpl}(\bar z|z_\circ,a_\circ) V_{\piexpl}(\bar z) \Bigr] \IND_{\{Z_t = z_\circ, A_t = a_\circ \}}.
\end{align}
Then the process $\{ X^1_t (z, a) \}_{t \geq 1}$ is given by the stochastic iteration
\begin{align}
    X^1_{t+1} (z_\circ, a_\circ) &= (1 - \alpha_t(z_\circ,a_\circ)) X^1_t(z_\circ, a_\circ) \nonumber \\
    &\quad + \alpha_t(z_\circ,a_\circ)  h_P( W_t ; z_\circ,a_\circ).
\end{align}
As argued earlier, the process $\{W_t\}_{t \ge 1}$ is a Markov chain.
Due to \autoref{ass:lr}, the learning rate $\alpha_t(z_\circ, a_\circ)$ is measurable with respect to the sigma-algebra generated by $(Z_{1:t}, A_{1:t})$ and is therefore also measurable with respect to the sigma-algebra generated by $W_{1:t}$. Thus, the learning rates $\{ \alpha_t(z_\circ, a_\circ) \}_{t \geq 1}$ satisfy the conditions of Theorem~$2.7$ from \cite{prashanth2024gradient}. Therefore, the theorem implies that  
$\{ X^1_t (z_\circ, a_\circ) \}_{t \geq 1}$ converges a.s. to the following limit
\begin{align*}
    & \lim_{t \to \infty} X^1_t (z_\circ, a_\circ) = \sum_{\substack{ s,z,a \in \ALPHABET S \times \ALPHABET Z \times \ALPHABET A \\ s',z',a' \in  \ALPHABET S \times \ALPHABET Z \times \ALPHABET A}}
    \bar \zeta_{\piexpl}(s,z,a,s', z', a') \\
    &\quad 
    h_P( s,z,a, s', z', a' ; z_\circ,a_\circ) \\
    %
    &= \gamma \biggl[ \sum_{z' \in \ALPHABET Z } 
       \bar \zeta_{\piexpl}( \mathcal S, z_\circ,a_\circ, \mathcal S, z',  \mathcal A) V_{\piexpl}(z') \biggr] 
       \\
       &\quad -
       \biggl[ \gamma \bar \zeta_{\piexpl}( \mathcal S, z_\circ,a_\circ, \mathcal S, \mathcal Z, \mathcal A) 
       \sum_{\bar z \in \ALPHABET Z} P_{\piexpl}(\bar z|z_\circ,a_\circ) V_{\piexpl}(\bar z) \biggr] 
    \\
    &= 0 
\end{align*}
where the last step follows from the fact that $\bar \zeta_{\piexpl}( \ALPHABET S, z_\circ,a_\circ, \ALPHABET S, \mathcal Z, \mathcal A) = \zexpl(z_\circ,a_\circ)$ and $\bar \zeta_{\piexpl}( \ALPHABET S, z_\circ,a_\circ, \ALPHABET S, z',  \ALPHABET A) = \zexpl(z_\circ,a_\circ) P_{\piexpl}(z'|z_\circ,a_\circ)$.




\paragraph{Convergence of component $X^{2}_t$} \label{subsec:convg-x2t}

    The convergence of the $X^{2}_t$ component is based on~\cite{Kara2022,sinha2024periodic} but requires some additional considerations due to the regularization term. We start by defining:
\begin{align*}
    \pi_t (\cdot \mid z) &= \argmax_{\xi \in \Delta(\ALPHABET A)} \sum_{a \in \ALPHABET A} \xi(a) Q_{t}(z, a) - \reg(\xi)\\
    \pi^{\star} (\cdot \mid z) &= \argmax_{\xi \in \Delta(\ALPHABET A)} \sum_{a \in \ALPHABET A} \xi(a) Q_{\piexpl}(z, a) - \reg(\xi).
\end{align*}

In the previous steps, we have shown that $\NORM{X^i_t} \to 0$ a.s., for $i \in \{0, 1\}$. Thus, we have that $\NORM{X^0_t + X^1_t} \to 0$ a.s. Arbitrarily fix an $\epsilon > 0$. Therefore, there exists a set $\Omega^1$ of measure one and a constant $T(\omega, \epsilon)$ such that for $\omega \in \Omega^1$, all $t > T(\omega, \epsilon)$, and  $(z,a) \in \times \ALPHABET Z \times \ALPHABET A$, we have 
\begin{equation}\label{eq:X1-eps}
    X^0_t(z,a) + X^1_t(z,a) < \epsilon.
\end{equation}

Now pick a constant $C$ such that 
\begin{equation}\label{eq:C}
    \kappa \coloneqq \gamma \left( 1 + \frac 1C \right) < 1
\end{equation}
Suppose for some $t > T(\omega, \epsilon)$, $\NORM{X^2_t} > C \epsilon$. Then, for $(z,a) \in \ALPHABET Z \times \ALPHABET A$, 
\begin{subequations}
\begin{align}
    &U^2_t(z,a) = \gamma V_t(Z_{t+1}) - \gamma V_{\piexpl}(Z_{t+1}) \\
    &= \gamma  \regconj(Q_{t}(Z_{t+1}, \cdot)) - \gamma  \regconj(Q_{\piexpl}(Z_{t+1}, \cdot)) \\
    &= \gamma \left[ \sum_{a \in \ALPHABET A} \pi_t(a \mid Z_{t+1}) Q_{t}(Z_{t+1}, a) - \reg(\pi_t(\cdot \mid Z_{t+1})) - \right. \nonumber \\
    &\quad \left.  \sum_{a \in \ALPHABET A} \pi^{\star}(a \mid Z_{t+1}) Q_{\piexpl}(Z_{t+1}, a) + \reg(\pi^{\star}(\cdot \mid Z_{t+1})) \right] 
    \displaybreak[1] \\
    &\stackrel{(a)}\leq \gamma \left[ \sum_{a \in \ALPHABET A} \pi_t(a \mid Z_{t+1}) Q_{t}(Z_{t+1}, a) - \reg(\pi_t(\cdot \mid Z_{t+1})) - \right. \nonumber \\
    &\quad \left. \sum_{a \in \ALPHABET A}\pi_t(a \mid Z_{t+1}) Q_{\piexpl}(Z_{t+1}, a) + \reg(\pi_t(\cdot \mid Z_{t+1})) \right] \\
    &\leq \gamma \sum_{a \in \ALPHABET A} \pi_t(a \mid Z_{t+1}) \bigl\lvert  Q_{t}(Z_{t+1}, a) - Q_{\piexpl}(Z_{t+1}, a) \bigr\rvert 
    \displaybreak[1] \\
    &\stackrel{(b)}\leq \gamma \lVert Q_{t} -  Q_{\piexpl} \rVert = \gamma \NORM{\Delta_t} \\
    &\le \gamma \NORM{X^0_t + X^1_t} + \gamma \NORM{X^2_t} \label{eq:U-bd2}
    \displaybreak[1]\\
    &\stackrel{(c)} \le \gamma \epsilon + \gamma \NORM{X^2_t} \label{eq:U-bd2-p2}
    \displaybreak[1] \\
    &\stackrel{(d)}\le \gamma \left( 1 + \frac 1C \right) \NORM{X^2_t}
    \stackrel{(e)} = \kappa \NORM{X^2_t}
    \stackrel{(e)} < \NORM{X^2_t},
    \label{eq:U-bd3}
\end{align}
\end{subequations}
where $(a)$ follows from the fact that we replace the argmax $\pi^{\star}$ with a different argument $\pi_t$ in the second term, $(b)$ follows from maximizing over all realizations of $Z_{t+1}$ and $a \in \ALPHABET A$, $(c)$ follows from \eqref{eq:X1-eps}, $(d)$ follows from $\NORM{X^2_t} > C \epsilon$, $(e)$ follows from \eqref{eq:C}. Thus, for any $t > T(\omega, \epsilon)$ and $\NORM{X^2_t} > C \epsilon$:
\begin{align*}
    X^2_{t+1}(z,a)
    &=
    (1 - \alpha_t(z,a)) X^2_t(z,a)
    +\\ 
    &\quad \alpha_t(z,a) U^2_t(z,a)
    < 
    \NORM{X^2_t} \\
    \implies \NORM{X^2_{t+1}} & < \NORM{X^2_t}.
\end{align*}

Hence, when $\NORM{X^2_t} > C\epsilon$, it decreases monotonically with time. Hence, there are two possibilities: 
either 
\begin{enumerate} 
    \item $\NORM{X^2_t}$ always remains above $C\epsilon$; or
    \item it goes below $C\epsilon$ at some stage.
\end{enumerate}
We consider these two possibilities separately.

\textbf{Possibility (i): \texorpdfstring{$\NORM{X^2_t}$ always remains above $C\epsilon$}{}}

\newcommand\M[1]{\NORM{M^{(#1)}_t}}



We will now prove that $\NORM{X^2_t}$ cannot remain above $C\epsilon$ forever.  The proof is by contradiction.
Suppose $\NORM{X^2_t}$ remains above $C\epsilon$ forever. As argued earlier, this implies that $\NORM{X^2_t}$, $t \ge T(\omega,\epsilon)$, is a strictly decreasing sequence, so it must be bounded from above. Let $B^{(0)}$ be such that $\NORM{X^2_t} \le B^{(0)}$ for all $t \ge T(\omega,\epsilon)$. Eq.~\eqref{eq:U-bd3} implies that $\NORM{U^2_t} < \kappa B^{(0)}$. Then, we have for all $(z,a) \in \ALPHABET Z \times \ALPHABET A$ that 
\begin{align*}
    X^2_{t+1} (z, a) &\le (1 - \alpha_t(z,a)) \NORM{X^2_t} + \alpha_t(z,a) \NORM{U^2_t} \\
    &< (1 - \alpha_t(z,a)) \NORM{X^2_t} + \alpha_t(z,a) \kappa \NORM{X^2_t}
\end{align*}
which implies that $\NORM{X^2_t} \le \M0$, where $\{M^{(0)}_t\}_{t \ge T(\omega,\epsilon)}$ is a sequence given by
\begin{align}
    M^{(0)}_{t+1}(z,a)  \le (1 - \alpha_t(z,a)) M^{(0)}_t(z,a) + \alpha_t(z,a) \kappa B^{(0)}.
    \label{eq:M0}
\end{align}
Theorem~$2.4$ from \cite{prashanth2024gradient} implies that $M^{(0)}_t(z,a) \to \kappa B^{(0)}$ and hence $\M0 \to \kappa B^{(0)}$. Now pick an arbitrary $\bar \epsilon \in (0, (1-\kappa) C\epsilon)$. Thus, there exists a time $T^{(1)} = T^{(1)}(\omega, \epsilon, \bar \epsilon)$ such that for all $t > T^{(1)}$, $\M0 \le B^{(1)} \coloneqq \kappa B^{(0)} + \bar \epsilon$. Since $\NORM{X^{2}_t}$ is bounded by $\M0$, this implies that for all $t > T^{(1)}$, $\NORM{X^2_t} \le B^{(1)}$ and, by~\eqref{eq:U-bd3}, $\NORM{U^2_t} \le \kappa B^{(1)}$. By repeating the above argument, there exists a time $T^{(2)}$ such that for all $t \ge T^{(2)}$, 
\begin{equation}
\NORM{X^2_t} \le B^{(2)} \coloneqq \kappa B^{(1)} + \bar \epsilon = \kappa^2 B^{(0)} + \kappa \bar \epsilon + \bar \epsilon, 
\end{equation}
and so on. By~\eqref{eq:C}, $\kappa < 1$ and $\bar\epsilon$ is chosen to be less than $C\epsilon$. So eventually, $B^{(m)} \coloneqq \kappa^m B^{(0)} + \kappa^{m-1} \bar \epsilon + \cdots + \bar \epsilon$ must get below $C\epsilon$ for some $m$, contradicting the assumption that $\NORM{X^2_t}$ remains above $C\epsilon$ forever.

\textbf{Possibility (ii): \texorpdfstring{$\NORM{X^2_t}$ goes below $C\epsilon$ at some stage}{}}

Suppose that there is some $t > T(\omega, \epsilon)$ such that $\NORM{X^2_t} < C \epsilon$. Then~\eqref{eq:U-bd2}, \eqref{eq:U-bd2-p2} and \eqref{eq:C} imply that
\begin{equation}
    \NORM{U^2_t} \le \gamma \NORM{X^0_t + X^1_t} + \gamma \NORM{X^2_t}
    \le \gamma \epsilon + \gamma C \epsilon < C\epsilon.
\end{equation}
Therefore,
\begin{equation}
    X^2_{t+1}(z,a) \le 
    (1 - \alpha_t(z,a)) \NORM{X^2_t} + \alpha_t(z,a) \NORM{U^2_t}
    < C \epsilon
\end{equation}
where the last inequality uses the fact that both $\NORM{U^2_t}$ and $\NORM{X^2_{t+1}}$ are both below $C\epsilon$. Thus, we have that 
\begin{equation}
    X^2_{t+1}(z,a) < C\epsilon.
\end{equation}
Hence, once $\NORM{ X^2_{t+1} }$ goes below $C\epsilon$, it stays there.

\paragraph{Implication}
We have show that for sufficiently large $t > T(\omega, \epsilon)$, $ X^2_{t}(z,a) < C \epsilon$. Since $\epsilon$ is arbitrary, this means that for all realizations $\omega \in \Omega^1$, $\NORM{X^2_{t}} \to 0$. Thus, 
\begin{equation}\label{eq:X2-limit}
    \lim_{t \to \infty} \NORM{X^2_{t}} = 0, \quad a.s.
\end{equation}

\textbf{Putting everything together}

Recall that we initially defined $\Delta_t = Q_t - Q_{\piexpl}$ and we split $\Delta_t = X^0_t + X^1_t + X^2_t$. Steps~$a)$ and $b)$ together show that $\NORM{X^0_t + X^1_t} \to 0$, a.s.\ and Step $c)$ \eqref{eq:X2-limit} shows us that $\NORM{X^2_{t}} \to 0$, a.s. Thus, by the triangle inequality, 
\begin{equation}
\lim_{t \to \infty} \NORM{\Delta_t} \le 
\lim_{t \to \infty} \NORM{X^0_t + X^1_t} 
+
\lim_{t \to \infty} \NORM{X^2_t} 
= 0,
\end{equation}
which establishes that $Q_t \to Q_{\piexpl}$, a.s.

\subsection{Proof of Theorem~\ref{thm:PRASQL}} \label{sec:PRASQL-proof}

The proof follows a similar style used in \cite{sinha2024periodic}. Define an error function between the converged value and the Q-learning iteration $\Delta^{\ell}_{t+1} \coloneqq Q^{\ell}_{t+1} - Q^{\ell}_{\piexpl}$. Then, combine \eqref{eq:reg-PASQL}, \eqref{eq:pseudo-mdp-dp-q-periodic} and \eqref{eq:artificial-MDP-periodic} as follows for all $(z,a)$.
\begin{align}
    &\Delta^{\ell}_{t+1}(z, a) = Q^{\ell}_{t+1}(z, a) - Q^{\ell}_{\piexpl}(z, a) \nonumber \\
    &\quad= (1 - \alpha_t(z, a)) \Delta^{\ell}_t(z, a) \nonumber \\
    &\qquad + \alpha_t(z, a) \left[ U^{\ell, 0}_t(z, a) + U^{\ell, 1}_t(z, a) + U^{\ell, 2}_t(z, a) \right], \label{eq:QL-update-diff-periodic}
\end{align}
where
\begin{align*}
\useshortskip
    & U^{\ell, 0}_t(z, a) \coloneqq \left[ r(S_t, A_t) - r^{\ell}_{\piexpl} (z, a) \right] \IND_{\{ Z_t = z, A_t = a\}},\\
    & U^{\ell, 1}_t(z, a) \coloneqq \left[ \vphantom{\sum_{A}} \gamma \regconj(Q^{\MOD{\ell+1}}_{\piexpl}(Z_{t+1}, \cdot)) \right. \\
    & - \left. \gamma \sum_{z' \in \ALPHABET Z} P^{\ell}_{\piexpl} (z' \mid z, a) \regconj(Q^{\MOD{\ell+1}}_{\piexpl}(z', \cdot)) \right] \IND_{\{ Z_t = z, A_t = a\}}, \\
    & U^{\ell, 2}_t(z, a) \coloneqq \left[ \gamma \regconj (Q^{\MOD{\ell+1}}_{t}(Z_{t+1}, \cdot)) \right. \\
    & - \left. \gamma \regconj(Q^{\MOD{\ell+1}}_{\piexpl}(Z_{t+1}, \cdot)) \right]
    \IND_{\{ Z_t = z, A_t = a\}}.
\end{align*}
Note that we are adding the term $\gamma \regconj(Q^{\MOD{\ell+1}}_{\piexpl}(Z_{t+1}, \cdot)) \IND_{\{ Z_t = z, A_t = a\}}$ in $U^{\ell, 1}_t(z, a)$ and subtracting it in $U^{\ell, 2}_t(z, a)$. We can now view \eqref{eq:QL-update-diff-periodic} as a linear system with state $\Delta^{\ell}_{t}$ and three inputs $U^{\ell, 0}_t(z, a), U^{\ell, 1}_t(z, a)$ and $U^{\ell, 2}_t(z, a)$. Using the linearity, we can now split the state into three components $\Delta^{\ell}_{t+1} = X^{\ell, 0}_{t+1} + X^{\ell, 1}_{t+1} + X^{\ell, 2}_{t+1}$, where the components evolve as follows for $i \in \{0, 1, 2\}$:
\begin{align*} 
    X^{\ell, i}_{t+1} (z, a) = (1 - \alpha_t(z, a)) X^{\ell, i}_{t} (z, a) + \alpha_t(z, a) U^{\ell, i}_t(z, a).
\end{align*}
We will now separately show each $\NORM{X^{\ell, i}_{t}} \to 0$.
\paragraph{Convergence of component $X^{\ell, 0}_t$}
The proof for the convergence of component $X^{\ell, 0}_t$ is similar to that given in \cite{sinha2024periodic}. The only difference from the RASQL proof of Theorem~\ref{thm:QL-POMDP} is that the convergence has to established for each $\ell \in [L]$ in $\NORM{X^{\ell, i}_{t}} \to 0$. Note that this case is identical to the periodic case of \cite{sinha2024periodic}, since the component $\NORM{X^{\ell, i}_{t}}$ does not involve any of the regularized terms. \\
The main result that is applied here is proposition~$4$ from \cite{sinha2024periodic}, which establishes the exact convergence of $\NORM{X^{\ell, i}_{t}}$ when the underlying Markov chain is periodic.
\paragraph{Convergence of component $X^{\ell, 1}_t$}
The proof for the convergence of component $X^{\ell, 1}_t$ is based on the argument given in \cite{sinha2024periodic}. Let $W_t$ denote the tuple $(S_t,Z_t,A_t,S_{t+1},Z_{t+1},A_{t+1})$. Note that $\{W_t\}_{t\ge1}$ is a periodic Markov chain and converges to a periodic limiting distribution $\bar \zeta^{\ell}_{\piexpl}$, where 
\begin{align*}
   &\bar \zeta^{\ell}_{\piexpl}(s,z,a,s',z',a') \\
   &\quad = \zexpl^{\ell}(s,z,a) \sum_{y' \in \ALPHABET Y} P(s',y'|s,a) \IND_{\{z' = \phi(z,y',a)\}} \piexpl(a'|z'). 
\end{align*}
We use $\bar \zeta^{\ell}_{\piexpl}(s,z,a, \mathcal S, \mathcal Z, \mathcal A)$ to denote the marginalization over the ``future states'' and a similar notation for other marginalizations. Note that $\bar \zeta^{\ell}_{\piexpl}(s,z,a, \mathcal S, \mathcal Z, \mathcal A) = \zexpl^{\ell}(s,z,a)$. \\
Define $V^{\MOD{\ell+1}}_{t}$ as the value function associated with $Q^{\MOD{\ell+1}}_t$, i.e., $V^{\MOD{\ell+1}}_t (z) \coloneqq \regconj(Q^{\MOD{\ell+1}}_t(z, \cdot))$. 
Fix $(z_\circ,a_\circ) \in \times \ALPHABET Z \times \ALPHABET A$ and define
\begin{align}
    h_P( W_t; \ell, z_\circ,a_\circ) &= \Bigl[ \gamma V^{\MOD{\ell+1}}_{\piexpl}(Z_{t+1}) - \nonumber \\
        &\gamma \sum_{\bar z \in \ALPHABET Z} P^{\ell}_{\piexpl}(\bar z|z_\circ,a_\circ) V^{\MOD{\ell+1}}_{\piexpl}(\bar z) \Bigr] \IND_{\{Z_t = z_\circ, A_t = a_\circ \}}.
\end{align}
Then the process $\{ X^{\ell, 1}_t (z, a) \}_{t \geq 1}$ is given by the stochastic iteration
\begin{align}
    X^{\ell, 1}_{t+1} (z_\circ, a_\circ) &= (1 - \alpha^{\ell}_t(z_\circ,a_\circ)) X^{\ell, 1}_t(z_\circ, a_\circ) \nonumber \\
    &\quad + \alpha^{\ell}_t(z_\circ,a_\circ)  h_P( W_t ; \ell, z_\circ,a_\circ).
\end{align}
As mentioned earlier, the process $\{W_t\}_{t \ge 1}$ is a periodic Markov chain.
From the periodic Markov chain result of proposition~$4$ from \cite{sinha2024periodic}, we have that:
$\{ X^{\ell, 1}_t (z_\circ, a_\circ) \}_{t \geq 1}$ converges a.s. to the following periodic limits
\begin{align*}
    & \lim_{t \to \infty} X^{\ell, 1}_t (z_\circ, a_\circ) = \sum_{\substack{ s,z,a \in \ALPHABET S \times \ALPHABET Z \times \ALPHABET A \\ s',z',a' \in  \ALPHABET S \times \ALPHABET Z \times \ALPHABET A}}
    \bar \zeta^{\ell}_{\piexpl}(s,z,a,s', z', a') \\
    &\quad 
    h_P( s,z,a, s', z', a' ; \ell, z_\circ,a_\circ) \\
    %
    &= \gamma \biggl[ \sum_{z' \in \ALPHABET Z } 
       \bar \zeta^{\ell}_{\piexpl}( \mathcal S, z_\circ,a_\circ, \mathcal S, z',  \mathcal A) V^{\MOD{\ell+1}}_{\piexpl}(z') \biggr] 
       \\
       &\quad -
       \biggl[ \gamma \bar \zeta^{\ell}_{\piexpl}( \mathcal S, z_\circ,a_\circ, \mathcal S, \mathcal Z, \mathcal A) 
       \sum_{\bar z \in \ALPHABET Z} P^{\ell}_{\piexpl}(\bar z|z_\circ,a_\circ) V^{\MOD{\ell+1}}_{\piexpl}(\bar z) \biggr] 
    \\
    &= 0 
\end{align*}
where the last step follows from the fact that $\bar \zeta^{\ell}_{\piexpl}( \ALPHABET S, z_\circ,a_\circ, \ALPHABET S, \mathcal Z, \mathcal A) = \zexpl^{\ell}(z_\circ,a_\circ)$ and $\bar \zeta^{\ell}_{\piexpl}( \ALPHABET S, z_\circ,a_\circ, \ALPHABET S, z',  \ALPHABET A) = \zexpl^{\ell}(z_\circ,a_\circ) P^{\ell}_{\piexpl}(z'|z_\circ,a_\circ)$.
\paragraph{Convergence of component $X^{\ell, 2}_t$} \label{subsec:convg-x2t-periodic}
The convergence of the $X^{\ell, 2}_t$ component is based on~\cite{Kara2022,sinha2024periodic} but requires some additional considerations due to the regularization term. We start by defining:
\begin{align*}
    \pi^{\ell}_t (\cdot \mid z) &= \argmax_{\xi \in \Delta(\ALPHABET A)} \sum_{a \in \ALPHABET A} \xi(a) Q^{\MOD{\ell+1}}_{t}(z, a) - \reg(\xi)\\
    \pi^{\ell, \star} (\cdot \mid z) &= \argmax_{\xi \in \Delta(\ALPHABET A)} \sum_{a \in \ALPHABET A} \xi(a) Q^{\MOD{\ell+1}}_{\piexpl}(z, a) - \reg(\xi).
\end{align*}
In the previous steps, we have shown that $\NORM{X^{\ell,i}_t} \to 0$ a.s., for $i \in \{0, 1\}$. Thus, we have that $\NORM{X^{\ell, 0}_t + X^{\ell, 1}_t} \to 0$ a.s. Arbitrarily fix an $\epsilon > 0$. Therefore, there exists a set $\Omega^1$ of measure one and a constant $T(\omega, \epsilon)$ such that for $\omega \in \Omega^1$, all $t > T(\omega, \epsilon)$, and  $(z,a) \in \times \ALPHABET Z \times \ALPHABET A$, we have 
\begin{equation}\label{eq:X1-eps-periodic}
    X^{\ell, 0}_t(z,a) + X^{\ell, 1}_t(z,a) < \epsilon.
\end{equation}
Now pick a constant $C$ such that 
\begin{equation}\label{eq:C-periodic}
    \kappa \coloneqq \gamma \left( 1 + \frac 1C \right) < 1
\end{equation}
Suppose for some $t > T(\omega, \epsilon)$, $\NORM{X^{\ell, 2}_t} > C \epsilon$. Then, for $(\ell, z,a) \in L \times \ALPHABET Z \times \ALPHABET A$, 
\begin{subequations}
\begin{align}
    &U^{\ell, 2}_t(z,a) = \gamma V^{\MOD{\ell+1}}_t(Z_{t+1}) - \gamma V^{\MOD{\ell+1}}_{\piexpl}(Z_{t+1}) \\
    &= \gamma  \regconj(Q^{\MOD{\ell+1}}_{t}(Z_{t+1}, \cdot)) - \gamma  \regconj(Q^{\MOD{\ell+1}}_{\piexpl}(Z_{t+1}, \cdot)) \\
    &= \gamma \left[ \sum_{a \in \ALPHABET A} \pi^{\ell}_t(a \mid Z_{t+1}) Q^{\MOD{\ell+1}}_{t}(Z_{t+1}, a) - \reg(\pi^{\ell}_t(\cdot \mid Z_{t+1})) - \right. \nonumber \\
    &\quad \left.  \sum_{a \in \ALPHABET A} \pi^{\ell, \star}(a \mid Z_{t+1}) Q^{\MOD{\ell+1}}_{\piexpl}(Z_{t+1}, a) + \reg(\pi^{\ell, \star}(\cdot \mid Z_{t+1})) \right] 
    \displaybreak[1] \\
    &\stackrel{(a)}\leq \gamma \left[ \sum_{a \in \ALPHABET A} \pi^{\ell}_t(a \mid Z_{t+1}) Q^{\MOD{\ell+1}}_{t}(Z_{t+1}, a) - \reg(\pi^{\ell}_t(\cdot \mid Z_{t+1})) - \right. \nonumber \\
    &\quad \left. \sum_{a \in \ALPHABET A}\pi^{\ell}_t(a \mid Z_{t+1}) Q^{\MOD{\ell+1}}_{\piexpl}(Z_{t+1}, a) + \reg(\pi^{\ell}_t(\cdot \mid Z_{t+1})) \right] \\
    &\leq \gamma \sum_{a \in \ALPHABET A} \pi^{\ell}_t(a \mid Z_{t+1}) \bigl\lvert Q^{\MOD{\ell+1}}_{t}(Z_{t+1}, a) - Q^{\MOD{\ell+1}}_{\piexpl}(Z_{t+1}, a) \bigr\rvert 
    \displaybreak[1] \\
    &\stackrel{(b)}\leq \gamma \lVert Q^{\MOD{\ell+1}}_{t} -  Q^{\MOD{\ell+1}}_{\piexpl} \rVert = \gamma \NORM{\Delta^{\ell}_t} \\
    &\le \gamma \NORM{X^{\ell, 0}_t + X^{\ell, 1}_t} + \gamma \NORM{X^{\ell, 2}_t} \label{eq:U-bd2-periodic}
    \displaybreak[1]\\
    &\stackrel{(c)} \le \gamma \epsilon + \gamma \NORM{X^{\ell, 2}_t} \label{eq:U-bd2-p2-periodic}
    \displaybreak[1] \\
    &\stackrel{(d)}\le \gamma \left( 1 + \frac 1C \right) \NORM{X^{\ell, 2}_t}
    \stackrel{(e)} = \kappa \NORM{X^{\ell, 2}_t}
    \stackrel{(e)} < \NORM{X^{\ell, 2}_t},
    \label{eq:U-bd3-periodic}
\end{align}
\end{subequations}
where $(a)$ follows from the fact that we replace the argmax $\pi^{\ell, \star}$ with a different argument $\pi^{\ell}_t$ in the second term, $(b)$ follows from maximizing over all realizations of $Z_{t+1}$ and $a \in \ALPHABET A$, $(c)$ follows from \eqref{eq:X1-eps-periodic}, $(d)$ follows from $\NORM{X^{\ell, 2}_t} > C \epsilon$, $(e)$ follows from \eqref{eq:C-periodic}. Thus, for any $t > T(\omega, \epsilon)$ and $\NORM{X^{\ell, 2}_t} > C \epsilon$:
\begin{align*}
    X^{\ell, 2}_{t+1}(z,a)
    &=
    (1 - \alpha^{\ell}_t(z,a)) X^{\ell, 2}_t(z,a)
    +\\ 
    &\quad \alpha^{\ell}_t(z,a) U^{\ell, 2}_t(z,a)
    < 
    \NORM{X^{\ell, 2}_t} \\
    \implies \NORM{X^{\ell, 2}_{t+1}} & < \NORM{X^{\ell, 2}_t}.
\end{align*}
Hence, when $\NORM{X^{\ell, 2}_t} > C\epsilon$, it decreases monotonically with time. Hence, there are two possibilities: 
either 
\begin{enumerate} 
    \item $\NORM{X^{\ell, 2}_t}$ always remains above $C\epsilon$; or
    \item it goes below $C\epsilon$ at some stage.
\end{enumerate}
These two cases must be considered separately. The proof follows exactly the same steps in the proof of theorem~\ref{thm:QL-POMDP} given in appendix~\ref{sec:RASQL-proof}, which finally gives us:
\begin{equation}\label{eq:X2-limit-periodic}
    \lim_{t \to \infty} \NORM{X^{\ell, 2}_{t}} = 0, \quad a.s.
\end{equation}
\\
\textbf{Putting everything together}
Recall that we initially defined $\Delta^{\ell}_t = Q^{\ell}_t - Q^{\ell}_{\piexpl}$ and we split $\Delta^{\ell}_t = X^{\ell, 0}_t + X^{\ell, 1}_t + X^{\ell, 2}_t$. Steps~$a)$ and $b)$ together show that $\NORM{X^{\ell, 0}_t + X^{\ell, 1}_t} \to 0$, a.s.\ and Step $c)$ \eqref{eq:X2-limit-periodic} shows us that $\NORM{X^{\ell, 2}_{t}} \to 0$, a.s. Thus, by the triangle inequality, 
\begin{equation}
\lim_{t \to \infty} \NORM{\Delta_t} \le 
\lim_{t \to \infty} \NORM{X^{\ell, 0}_t + X^{\ell, 1}_t} 
+
\lim_{t \to \infty} \NORM{X^{\ell, 2}_t} 
= 0,
\end{equation}
which establishes that $Q^{\ell}_t \to Q^{\ell}_{\piexpl}$, a.s.

\end{document}